\newcommand{\revA}[1]{{#1}}
\newcommand{\gradLoss}[0]{{\mathbf{g}}}
\newcommand{\gradRand}[0]{{\pmb{\bar{g}}}}
\newcommand{\layerInput}[0]{{\mathbf{a}}}
\newcommand{\gradLossV}[0]{{\mathrm{g}}}
\newcommand{\gradRandV}[0]{{{\bar{g}}}}
\newcommand{\layerInputV}[0]{{\mathrm{a}}}
\newcommand{\lambdaG}[0]{{\lambda_\gradLoss{}}}
\newcommand{\lambdaX}[0]{{\lambda_\layerInput{}}}
\newcommand{\Zeta}[0]{{\pmb{\zeta}}}
\newcommand{\theB}[0]{{b}}
\setlist{leftmargin=2em,topsep=-2pt,itemsep=-.5ex,partopsep=1ex,parsep=1ex}
\newtheorem{theorem}{Theorem}
\newtheorem{remark}{Remark}
\author{%
\textbf{Felix Petersen$^{12}$, Tobias Sutter$^2$, Christian Borgelt$^3$, Dongsung Huh$^4$,}\\ 
\textbf{Hilde Kuehne$^{45}$, Yuekai Sun$^6$, Oliver Deussen$^2$} \\
$^1$Stanford University, $^2$University of Konstanz, $^3$University of Salzburg, \\
$^4$MIT-IBM Watson AI Lab, $^5$University of Frankfurt, $^6$University of Michigan\\
\texttt{petersen@cs.stanford.edu}\vspace{-1em}
}
\title{ISAAC Newton: Input-based Approximate\\ Curvature for Newton's Method}
\begin{document}

\maketitle

\begin{abstract}
We present ISAAC (Input-baSed ApproximAte Curvature), a novel method that conditions the gradient using selected second-order information and has an asymptotically vanishing computational overhead, assuming a batch size smaller than the number of neurons. We show that it is possible to compute a good conditioner based on only the input to a respective layer without a substantial computational overhead. The proposed method allows effective training even in small-batch stochastic regimes, which makes it competitive to first-order as well as second-order methods.
\end{abstract}

\section{Introduction}

While second-order optimization methods are traditionally much less explored than first-order methods in large-scale machine learning (ML) applications due to their memory requirements and prohibitive computational cost per iteration, they have recently become more popular in ML mainly due to their fast convergence properties when compared to first-order methods \cite{ref:Agrawal-17}.
The expensive computation of an inverse Hessian (also known as pre-conditioning matrix) in the Newton step has also been tackled via estimating the curvature from the change in gradients. 
Loosely speaking, these algorithms are known as \textit{quasi-Newton methods}; for a comprehensive treatment, see Nocedal \& Wright~\cite{NoceWrig06}.
Various approximations to the pre-conditioning matrix have been proposed in recent literature \cite{ref:Gonen-15,ref:Pilnaci-17,ref:Montanari-15,ref:mfac-21}. 
From a theoretical perspective, second-order optimization methods are not nearly as well understood as first-order methods. 
It is an active research direction to fill this gap \cite{ref:Doikov-20,ref:Nesterov-06}.

Motivated by the task of training neural networks, and the observation that invoking local curvature information associated with neural network objective functions can achieve much faster progress per iteration than standard first-order methods \cite{ref:Lecun-89,ref:Schaul-13,ref:Ollivier-15}, several methods have been proposed. 
One of these methods, that received significant attention, is known as \textit{Kronecker-factored Approximate Curvature (K-FAC)} \cite{ref:Martens-15}, whose main ingredient is a sophisticated approximation to the generalized Gauss-Newton matrix and the Fisher information matrix quantifying the curvature of the underlying neural network objective function, which then can be inverted efficiently.

Inspired by the K-FAC approximation and the Tikhonov regularization of the Newton method, we introduce a novel two parameter regularized Kronecker-factorized Newton update step. 
The proposed scheme disentangles the classical Tikhonov regularization and in a specific limit allows us to condition the gradient using selected second-order information and has an asymptotically vanishing computational overhead. 
While this case makes the presented method highly attractive from the computational complexity perspective, we demonstrate that its empirical performance on high-dimensional machine learning problems remains comparable to existing SOTA methods.

The contributions of this paper can be summarized as follows:
(i) we propose a novel two parameter regularized K-FAC approximated Gauss-Newton update step;
(ii) we prove that for an arbitrary pair of regularization parameters, the proposed update direction is always a direction of decreasing loss;
(iii) in the limit, as one regularization parameter grows, we obtain an efficient and effective conditioning of the gradient with an asymptotically vanishing overhead;
(iv) we empirically analyze the method and find that our efficient conditioning method maintains the performance of its more expensive counterpart;
(v) we demonstrate the effectiveness of the method in small-batch stochastic regimes and observe performance competitive to first-order as well as quasi-Newton methods.

\section{Preliminaries}

In this section, we review aspects of second-order optimization, with a focus on generalized Gauss-Newton methods. 
In combination with Kronecker factorization, this leads us to a new regularized update scheme.
We consider the training of an $L$-layer neural network $f(x;\theta)$ defined recursively as
\begin{equation}
z_i \gets a_{i-1}W^{(i)} \quad \text{(pre-activations),} 
\qquad \qquad a_i \gets \phi(z_i)  \quad \text{(activations),}
\end{equation}
where $a_0 = x$ is the vector of inputs and $a_L = f(x;\theta)$ is the vector of outputs. 
Unless noted otherwise, we assume these vectors to be row vectors (i.e., in $\mathbb{R}^{1\times n}$) as this allows for a direct extension to the (batch) vectorized case (i.e., in $\mathbb{R}^{b\times n}$) introduced later. 
For any layer $i$, let $W^{(i)}\in\mathbb{R}^{d_{i-1}\times d_{i}}$ be a weight matrix and let $\phi$ be an element-wise nonlinear function. 
We consider a convex loss function $\mathcal{L}(y,y')$ that measures the discrepancy between $y$ and $y'$. 
The training optimization problem is then
\begin{equation}   \label{eq:training:problem}
    \arg\min_{\theta} \mathbb{E}_{x,y} \left[ \mathcal{L}(f(x; \theta), y) \right]\,,
\end{equation}
where $\theta=\begin{bmatrix}\theta^{(1)},\dots,\theta^{(L)}\end{bmatrix}$ with $\theta^{(i)}=\mathrm{vec}(W^{(i)})$.

The classical Newton method for solving \eqref{eq:training:problem} is expressed as the update rule
\begin{equation}\label{eq:classical:newton:method}
    \theta' = \theta - \eta\, \mathbf{H}^{-1}_{\theta} \nabla_{\theta} \mathcal{L}(f(x; \theta), y)\,,
\end{equation}
where $\eta>0$ denotes the learning rate and $\mathbf{H}_{\theta}$ is the Hessian corresponding to the objective function in \eqref{eq:training:problem}.
The stability and efficiency of an estimation problem solved via the Newton method can be improved by adding a Tikhonov regularization term \cite{ref:Tikhonov-77} leading to a regularized Newton method
\begin{equation}\label{eq:regularized:Newton}
    \theta' = \theta - \eta\, (\mathbf{H}_{\theta}+\lambda \mathbf{I})^{-1} \nabla_{\theta} \mathcal{L}(f(x; \theta), y)\,,
\end{equation}
where $\lambda>0$ is the so-called Tikhonov regularization parameter. 
It is well-known \cite{ref:chen-11, dangel2020backpack}, that under the assumption of approximating the model $f$ with its first-order Taylor expansion, the Hessian corresponds with the so-called generalized Gauss-Newton (GGN) matrix
$\mathbf{G}_\theta$, and hence \eqref{eq:regularized:Newton} can be expressed as
\begin{equation}\label{eq:Netwon:method:with:GGN}
    \theta' = \theta - \eta\, (\mathbf{G}_\theta+\lambda \mathbf{I})^{-1} \nabla_{\theta} \mathcal{L}(f(x; \theta), y)\,. 
\end{equation}
A major practical limitation of \eqref{eq:Netwon:method:with:GGN} is the computation of the inverse term. 
A method that alleviates this difficulty is known as Kronecker-Factored Approximate Curvature (K-FAC) \cite{ref:Martens-15} which approximates the block-diagonal (i.e., layer-wise) empirical Hessian or GGN matrix.
Inspired by K-FAC, there have been other works discussing approximations of $\mathbf{G}_\theta$ and its inverse \cite{dangel2020backpack}. 
In the following, we discuss a popular approach that allows for (moderately) efficient computation.

The generalized Gauss-Newton matrix $\mathbf{G}_\theta$ is defined as 
\begin{align}
    \mathbf{G}_\theta = \mathbb{E}\left[ (\mathbf{J}_{\theta}f(x; \theta))^\top \nabla^2_f \mathcal{L}(f(x; \theta), y)\, \mathbf{J}_{\theta}f(x; \theta) \right],
\end{align}
where $\mathbf{J}$ and $\nabla^2$ denote the Jacobian and Hessian matrices, respectively.
Correspondingly, the diagonal block of $\mathbf{G}_\theta$ corresponding to the weights of the $i$th layer $W^{(i)}$ is 
\begin{align*}
    \mathbf{G}_{W^{(i)}} {=} \mathbb{E}\!\left[ (\mathbf{J}_{W^{(i)}}f(x; \theta))^\top \nabla^2_f \mathcal{L}(f(x; \theta), y)\, \mathbf{J}_{W^{(i)}}f(x; \theta) \right]\!.
\end{align*}
According to the backpropagation rule $\mathbf{J}_{\revA{W^{(i)}}}f(x; \theta) = \mathbf{J}_{z_i}f(x; \theta)\, a_{i-1}$, $a^\top b=a\otimes b$, and the mixed-product property, we can rewrite $\mathbf{G}_{W^{(i)}}$ as
{
\begin{align}
    \kern-.7em \mathbf{G}_{W^{(i)}} &{=} \mathbb{E}\! \Big[ \big((\mathbf{J}_{z_i}f(x; \theta)\, a_{i-1})^\top (\nabla^2_f \mathcal{L}(f(x; \theta), y))^{1/2}\!\big) \big((\nabla^2_f \mathcal{L}(f(x; \theta), y))^{1/2}\, \mathbf{J}_{z_i}f(x; \theta)\, a_{i-1}\big) \Big] \\
    &{=} \mathbb{E} \big[ (\gradRandV^\top a_{i-1})^\top (\gradRandV^\top a_{i-1}) \big]  
    = \mathbb{E} \big[ (\gradRandV \otimes a_{i-1})^\top (\gradRandV \otimes a_{i-1}) \big] 
    = \mathbb{E} \big[ (\gradRandV^\top \gradRandV) \otimes (\revA{a_{i-1}^\top a_{i-1}}) \big] \label{eq:G-expectation-of-kronecker} ,
\end{align}
}
where 
\begin{equation}
    \gradRandV = (\mathbf{J}_{z_i}f(x; \theta))^\top\, (\nabla^2_f \mathcal{L}(f(x; \theta), y))^{1/2}\, .
\end{equation}
\begin{remark}[Monte-Carlo Low-Rank Approximation for $\gradRandV^\top\gradRandV$]
\label{remark-mc-g-bar}
    As $\gradRandV$ is a matrix of shape $m\times d_i$ where $m$ is the dimension of the output of $f$, $\gradRandV$ is generally expensive to compute.
    Therefore, \cite{ref:Martens-15} use a low-rank Monte-Carlo approximation to estimate $\revA{\nabla^2_f} \mathcal{L}(f(x; \theta), y)$ and thereby $\gradRandV^\top\gradRandV$.
    For this, we need to use the distribution underlying the probabilistic model of our loss $\mathcal{L}$ (e.g., Gaussian for MSE loss, or a categorical distribution for cross entropy).
    Specifically, by sampling from this distribution $p_f(x)$ defined by the network output $f(x; \theta)$, we can get an estimator of $\revA{\nabla^2_f}\mathcal{L}(f(x; \theta), y)$ via the identity
    \vspace*{-\baselineskip}
    \begin{align}
        \revA{\nabla^2_f}\mathcal{L}(f(x; \theta), y) = \mathbb{E}_{\hat y \sim p_f(x)} \big[ &\nabla_f \mathcal{L}(f(x; \theta), \hat y)^\top  \nabla_f \mathcal{L}(f(x; \theta), \hat y) \big] \,.
    \end{align}
    An extensive reference for this (as well as alternatives) can be found in Appendix A.2 of Dangel~\textit{et al.}~\cite{dangel2020backpack}.
    The respective rank-1 approximation (denoted by $\triangleq$) of $\revA{\nabla^2_f}\mathcal{L}(f(x; \theta))$ is 
    \vspace*{-.25em}
    \begin{align*}
        \revA{\nabla^2_f}\mathcal{L}(f(x; \theta), y) \triangleq \nabla_f \mathcal{L}(f(x; \theta), \hat y)^\top \nabla_f \mathcal{L}(f(x; \theta), \hat y)\,,
    \end{align*}
    where $\hat y \sim p_f(x)$.
    Respectively, we can estimate $\gradRandV^\top\gradRandV$ using this rank-1 approximation with
    \begin{align}
        \gradRandV & \triangleq (\mathbf{J}_{z_i}f(x; \theta))^\top\, \nabla_f \mathcal{L}(f(x; \theta), \hat y)   = \nabla_{z_i} \mathcal{L}(f(x; \theta), \hat y)\,.
    \end{align}
\end{remark}
In analogy to $\gradRandV$, we introduce the gradient of training objective with respect to~pre-activations $z_i$ as
\begin{align}
    \gradLossV_i &= (\mathbf{J}_{z_i}f(x; \theta))^\top\, \nabla_f \mathcal{L}(f(x; \theta), y) = \nabla_{z_i} \mathcal{L}(f(x; \theta), y)\,.
\end{align}
In other words, for a given layer, let $\mathrm{g}\in\mathbb{R}^{1\times d_i}$ denote the gradient of the loss between an output and the ground truth and let $\gradRandV\in\mathbb{R}^{m\times d_i}$ denote the derivative of the network $f$ times the square root of the Hessian of the loss function (which may be approximated according to Remark~\ref{remark-mc-g-bar}), each of them with respect to~the output $z_i$ of the given layer~$i$.
Note that $\gradRandV$ is not equal to $\gradLossV$ and that they require one backpropagation pass each (or potentially many for the case of~$\gradRandV$). 
This makes computing $\gradRandV$ costly. 

Applying the K-FAC \cite{ref:Martens-15} approximation to \eqref{eq:G-expectation-of-kronecker} the expectation of Kronecker products can be approximated as the Kronecker product of expectations as 
\begin{equation}\label{eq:fisher:2}
\begin{aligned}
    \mathbf{G} & =  \mathbb{E}((\gradRandV^\top\gradRandV) \otimes (\layerInputV^\top\layerInputV)) \approx \mathbb{E}(\gradRandV^\top\gradRandV) \otimes \mathbb{E}(\layerInputV^\top\layerInputV)\,,
\end{aligned}
\end{equation}
where, for clarity, we drop the index of $\layerInputV_{i-1}$ in \eqref{eq:G-expectation-of-kronecker} and denote it with $\layerInputV$; similarly we denote $\mathbf{G}_{W^{(i)}}$ as $\mathbf{G}$.
While the expectation of Kronecker products is generally not equal to the Kronecker product of expectations, this K-FAC approximation \eqref{eq:fisher:2} has been shown to be fairly accurate in practice and to preserve the ``coarse structure'' of the GGN matrix \cite{ref:Martens-15}. 
The K-FAC decomposition in \eqref{eq:fisher:2} is convenient as the Kronecker product has the favorable property that for two matrices $A,B$ the identity $(A\otimes B)^{-1} = A^{-1}\otimes B^{-1}$ which significantly simplifies the computation of an inverse. 

In practice, $\mathbb{E}(\gradRandV^\top\gradRandV)$ and $\mathbb{E}(\layerInputV^\top\layerInputV)$ can be computed by averaging over a batch of size $\theB$ as
\begin{align}\label{eq:expec:g:x} 
    \mathbb{E}(\gradRandV^\top\gradRandV) \simeq \gradRand^\top\gradRand / \theB, \qquad\qquad     \mathbb{E}(\layerInputV^\top\layerInputV) \simeq \layerInput^\top\layerInput / \theB ,
\end{align}
where we denote batches of $\mathrm{g}$, $\gradRandV$ and $\layerInputV$, as $\gradLoss{}\in \mathbb{R}^{b\times d_i}$, $\gradRand{}\in \mathbb{R}^{rb\times d_i}$ and $\layerInput\in\mathbb{R}^{b\times d_{i-1}}$, where our layer has $d_{i-1}$ inputs, $d_i$ outputs, $b$ is the batch size, and $r$ is either the number of outputs $m$ or the rank of an approximation according to Remark~\ref{remark-mc-g-bar}.
Correspondingly, the K-FAC approximation of the GGN matrix and its inverse are concisely expressed as
\begin{align} 
    \label{eq:KFAC:formula} \mathbf{G} \approx (\gradRand{}^\top\gradRand{}) \otimes (\layerInput{}^\top\layerInput{}) / \theB^2  \qquad\qquad
    \mathbf{G}^{-1} \approx \left(\gradRand{}^\top\gradRand{}\right)^{-1} \!\!\! \otimes \! \left(\layerInput{}^\top\layerInput{}\right)^{-1}\! \cdot \theB^2\,. 
\end{align}
Equipped with the standard terminology and setting, we now introduce the novel, regularized update step. First, inspired by the K-FAC approximation \eqref{eq:fisher:2}, the Tikhonov  regularized Gauss-Newton method \eqref{eq:Netwon:method:with:GGN} can be approximated by
\begin{equation}
         \theta^{(i)}{}' = \theta^{(i)} - \eta (\gradRand{}^\top\gradRand{}/\theB + \lambda \mathbf{I})^{-1} \otimes (\layerInput{}^\top\layerInput{}/\theB + \lambda \mathbf{I})^{-1} \nabla_{\theta^{(i)}} \mathcal{L}(f(x; \theta)), 
         \label{eq:tihonov-gn-method-step}
\end{equation}
with regularization parameter $\lambda>0$. 
A key observation, which is motivated by the structure of the above update, is to disentangle the two occurrences of $\lambda$ into two independent regularization parameters $\lambdaG, \lambdaX>0$. 
By defining the Kronecker-factorized Gauss-Newton update step as
\begin{equation}\label{eq:def:zeta}
    \Zeta = \lambdaG \lambdaX (\gradRand{}^\top\gradRand{}/\theB + \lambdaG \mathbf{I})^{-1} \otimes (\layerInput{}^\top\layerInput{}/\theB + \lambdaX \mathbf{I})^{-1} \nabla_{\theta^{(i)}} \mathcal{L}(f(x; \theta)),
\end{equation}
we obtain the concise update equation
\begin{equation} \label{eq:update:equation:zeta}
   \qquad\qquad\theta^{(i)}{}' = \theta^{(i)} - \eta^* \Zeta.
\end{equation}
This update \eqref{eq:update:equation:zeta} is equivalent to update~\eqref{eq:tihonov-gn-method-step} when in the case of $\eta^* = \frac{\eta}{\lambdaG \lambdaX}$ and $\lambda=\lambdaG=\lambdaX$.
This equivalence does not restrict $\eta^*, \lambdaG, \lambdaX$ in any way, and changing $\lambdaG$ or $\lambdaX$ does not mean that we change our learning rate or step size $\eta^*$.
Parameterizing $\Zeta$ in \eqref{eq:def:zeta} with the multiplicative terms $\lambdaG{}\lambdaX{}$ makes the formulation more convenient for analysis.

In this paper, we investigate the theoretical and empirical properties of the iterative update rule \eqref{eq:update:equation:zeta} and in particular show how the regularization parameters $\lambdaG,\lambdaX$ affect the Kronecker-factorized Gauss-Newton update step $\Zeta$.
When analyzing the Kronecker-factorized Gauss-Newton update step $\Zeta$, a particularly useful tool is the vector product identity,
\begin{equation}\label{eq:IHGP}
    \left( \left(\gradRand^\top\gradRand\right)^{-1}  \otimes  \left(\layerInput^\top\layerInput\right)^{-1}\right) \mathrm{vec}(\gradLoss^\top\layerInput) 
    =
    \mathrm{vec}\left(\left(\gradRand^\top\gradRand\right)^{-1}  \gradLoss^\top\layerInput  \left(\layerInput^\top\layerInput\right)^{-1}\right),
\end{equation}%
where the gradient with respect to~the weight matrix is $\gradLoss^\top\layerInput$.

\section{Theoretical Guarantees}

In this section, we investigate the theoretical properties of the Kronecker-factorized Gauss-Newton update direction $\Zeta$ as defined in \eqref{eq:def:zeta}.  
We recall that $\Zeta$ introduces a Tikonov regularization, as it is commonly done in implementations of second order-based methods. 
Not surprisingly, we show that by decreasing the regularization parameters $\lambdaG{},\lambdaX{}$ the update rule \eqref{eq:update:equation:zeta} collapses (in the limit) to the classical Gauss-Newton method, and hence in the regime of small $\lambdaG{},\lambdaX{}$ the variable $\Zeta$ describes the Gauss-Newton direction. 
Moreover, by increasing the regularization strength, we converge (in the limit) to the conventional gradient descent update step.

The key observation is that, as we disentangle the regularization of the two Kronecker factors $\gradRand^\top\gradRand$ and $\layerInput^\top\layerInput$, and consider the setting where only one regularizer is large ($\lambdaG{}\to\infty$ to be precise), we obtain an update direction that can be computed highly efficiently. We show that this setting describes an approximated Gauss-Newton update scheme, whose superior numerical performance is then empirically demonstrated in Section~\ref{sec:experiments}. 

\begin{theorem}[Properties of $\Zeta$] \label{thm:properties:zeta}
   The K-FAC based update step $\Zeta$ as defined in \eqref{eq:def:zeta} can be expressed as
   \begin{equation}\label{eq:zeta:thm1}
    \begin{aligned}
        \Zeta = \ \Bigg(\mathbf{I}_m - \frac{1}{\theB\lambdaG{}} \gradRand^\top \left(\mathbf{I}_b + \frac{1}{\theB\lambdaG{}}\gradRand{}\gradRand{}^\top \right)^{-1} \gradRand \Bigg) \cdot \gradLoss{}^\top 
    \cdot\ \Bigg( \mathbf{I}_b - \frac{1}{\theB\lambdaX{}} \layerInput{}\layerInput{}^\top \left(\mathbf{I}_b + \frac{1}{\theB\lambdaX{}}\layerInput{}\layerInput{}^\top \right)^{-1} \Bigg) \cdot \layerInput{} \,.
    \end{aligned}
    \end{equation}
    Moreover, $\Zeta$ admits the following asymptotic properties:
    \begin{enumerate}
       \item[(i)] \label{item:i:thm1}  In the limit of $\lambdaG{}, \lambdaX{} \to 0$, $\frac{1}{\lambdaG{}\lambdaX{}}\Zeta$ is the K-FAC approximation of the Gauss-Newton step, i.e.,
       $
            \lim_{\lambdaG{}, \lambdaX{} \to 0} \frac{1}{\lambdaG{}\lambdaX{}}\Zeta \approx \mathbf{G}^{-1} \nabla_{\theta^{(i)}}\mathcal{L}(f(x;\theta)),
    $
where $\approx$ denotes the K-FAC approximation \eqref{eq:KFAC:formula}.
    \item[(ii)] \label{item:ii:thm1} In the limit of $\lambdaG{}, \lambdaX{} \to \infty$, $\Zeta$ is the gradient, i.e., 
    $
            \lim_{\lambdaG{}, \lambdaX{} \to \infty} \Zeta = \nabla_{\theta^{(i)}}\mathcal{L}(f(x;\theta)).
    $
    \end{enumerate}
    \textit{The Proof is deferred to the Supplementary Material.}
    \end{theorem}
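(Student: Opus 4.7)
The proof of Theorem~\ref{thm:properties:zeta} has three components: the closed-form expression \eqref{eq:zeta:thm1}, and the two limiting statements (i) and (ii). My plan is to derive the closed form first, since both limits then follow by inspection.

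For the closed form, the key observation is that each regularized Kronecker factor $(\gradRand^\top\gradRand/\theB + \lambdaG \mathbf{I})^{-1}$ and $(\layerInput^\top\layerInput/\theB + \lambdaX \mathbf{I})^{-1}$ fits exactly the template of the Sherman--Morrison--Woodbury identity applied to a low-rank update of a scaled identity. Writing $B = \gradRand/\sqrt{\theB}$, Woodbury yields
\begin{equation*}
    \lambdaG(\gradRand^\top\gradRand/\theB + \lambdaG\mathbf{I})^{-1} = \mathbf{I} - \tfrac{1}{\theB\lambdaG}\gradRand^\top\bigl(\mathbf{I} + \tfrac{1}{\theB\lambdaG}\gradRand\gradRand^\top\bigr)^{-1}\gradRand,
\end{equation*}
and analogously for the $\layerInput$ factor. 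This moves the inverse from a $d_i\times d_i$ (resp.\ $d_{i-1}\times d_{i-1}$) system onto the $\theB\times\theB$ Gram matrix $\gradRand\gradRand^\top$ (resp.\ $\layerInput\layerInput^\top$), which is precisely what the theorem asserts. Next I would absorb the scalar $\lambdaG\lambdaX$ in \eqref{eq:def:zeta} into the two Woodbury-processed factors and invoke the vec/Kronecker identity \eqref{eq:IHGP} to replace the Kronecker product acting on $\mathrm{vec}(\gradLoss^\top\layerInput)$ by an ordinary matrix sandwich. A brief rearrangement, using the push-through consequence that the second Woodbury factor multiplied by $\layerInput$ on the right equals $\bigl(\mathbf{I} - \tfrac{1}{\theB\lambdaX}\layerInput\layerInput^\top(\mathbf{I} + \tfrac{1}{\theB\lambdaX}\layerInput\layerInput^\top)^{-1}\bigr)\layerInput$, delivers the stated form \eqref{eq:zeta:thm1}.

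For the asymptotics, limit (i) is cleanest directly from the definition \eqref{eq:def:zeta}: assuming $\gradRand^\top\gradRand$ and $\layerInput^\top\layerInput$ are invertible, continuity of the matrix inverse gives
\begin{equation*}
    \tfrac{1}{\lambdaG\lambdaX}\Zeta \;\longrightarrow\; (\gradRand^\top\gradRand/\theB)^{-1}\otimes(\layerInput^\top\layerInput/\theB)^{-1}\,\nabla_{\theta^{(i)}}\mathcal{L},
\end{equation*}
which is precisely $\mathbf{G}^{-1}\nabla_{\theta^{(i)}}\mathcal{L}$ under the K-FAC approximation \eqref{eq:KFAC:formula}. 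Limit (ii) reads off the Woodbury form \eqref{eq:zeta:thm1}: as $\lambdaG,\lambdaX\to\infty$ the scalars $1/(\theB\lambdaG)$ and $1/(\theB\lambdaX)$ vanish, both bracketed factors converge to identity matrices (the matrices $\gradRand\gradRand^\top$ and $\layerInput\layerInput^\top$ being fixed), and hence $\Zeta\to\gradLoss^\top\layerInput$, which is the gradient with respect to the weight matrix.

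The main obstacle is bookkeeping rather than ideas: one has to keep careful track of the shapes and transposes of $\gradLoss$, $\gradRand$, and $\layerInput$ when passing between the vectorized Kronecker form \eqref{eq:def:zeta} and the matrix form \eqref{eq:zeta:thm1}, in particular to verify that the $\lambdaG\lambdaX$ prefactor is exactly consumed by the two Woodbury normalizations with no stray scalars. Beyond that, everything reduces to a direct application of Woodbury, the identity \eqref{eq:IHGP}, and continuity of the matrix inverse.
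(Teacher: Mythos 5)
Your proposal is correct and follows essentially the same route as the paper's proof: the Woodbury identity applied to each regularized Kronecker factor (yielding exactly the $\lambdaG(\lambdaG\mathbf{I}+\gradRand^\top\gradRand/\theB)^{-1}=\mathbf{I}_m-\frac{1}{\theB\lambdaG}\gradRand^\top(\mathbf{I}_b+\frac{1}{\theB\lambdaG}\gradRand\gradRand^\top)^{-1}\gradRand$ normalization you state), combined with the vec/Kronecker identity \eqref{eq:IHGP} and the push-through rearrangement of the $\layerInput$ factor, then limit (i) by continuity of the inverse from the definition \eqref{eq:def:zeta} and limit (ii) by sending the rank-correction terms to zero. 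The only differences are cosmetic---you apply Woodbury before the vec identity where the paper does the reverse, and you read limit (ii) off the Woodbury form rather than off the pre-Woodbury factors---and you make explicit the invertibility assumption on $\gradRand^\top\gradRand$ and $\layerInput^\top\layerInput$ that the paper leaves implicit.
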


We want to show that $\Zeta$ is well-defined and points in the correct direction, not only for $\lambdaG{}$ and $\lambdaX{}$ numerically close to zero because we want to explore the full spectrum of settings for $\lambdaG{}$ and $\lambdaX{}$.
Thus, we prove that $\Zeta$ is a direction of increasing loss, independent of the choices of $\lambdaG{}$ and $\lambdaX{}$.

\begin{theorem}[Correctness of $\Zeta$ is independent of $\lambdaG{}$ and $\lambdaX{}$] \label{thm:correctness:zeta}
    $\Zeta$ is a direction of increasing loss, independent of the choices of $\lambdaG{}$ and $\lambdaX{}$.
\end{theorem}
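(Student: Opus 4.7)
The plan is to reduce the claim to showing that the Frobenius inner product $\langle \Zeta,\nabla_{\theta^{(i)}}\mathcal{L}\rangle_F$ is strictly positive whenever the gradient is nonzero; this is precisely what ``$\Zeta$ is a direction of increasing loss'' means, since \eqref{eq:update:equation:zeta} steps along $-\Zeta$. Working from the factored expression \eqref{eq:zeta:thm1} in Theorem~\ref{thm:properties:zeta}, I will identify the two outer factors as symmetric positive-definite matrices, collapse the inner product into a squared Frobenius norm by exploiting a commutativity relation, and then trace the equality case back to the gradient itself.

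The first step is to simplify the two Woodbury-style factors in \eqref{eq:zeta:thm1}. The push-through identity $(\mathbf{I}+Y^\top Y)^{-1}=\mathbf{I}-Y^\top(\mathbf{I}+YY^\top)^{-1}Y$ rewrites the left factor as $A:=\bigl(\mathbf{I}+\tfrac{1}{\theB\lambdaG}\gradRand^\top\gradRand\bigr)^{-1}$, and the elementary identity $\mathbf{I}-M(\mathbf{I}+M)^{-1}=(\mathbf{I}+M)^{-1}$ turns the right factor into $B:=\bigl(\mathbf{I}_b+\tfrac{1}{\theB\lambdaX}\layerInput\layerInput^\top\bigr)^{-1}$. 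Hence $\Zeta = A\,\gradLoss^\top B\,\layerInput$, with both $A$ and $B$ symmetric positive definite for all $\lambdaG,\lambdaX>0$.

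Next I compute the inner product with the (matrix-form) gradient $\gradLoss^\top\layerInput$ from \eqref{eq:IHGP}:
\begin{equation*}
\langle \Zeta,\,\gradLoss^\top\layerInput\rangle_F
= \mathrm{tr}\bigl((A\,\gradLoss^\top B\,\layerInput)^\top\,\gradLoss^\top\layerInput\bigr)
= \mathrm{tr}\bigl(A\,\gradLoss^\top\,\layerInput\layerInput^\top B\,\gradLoss\bigr),
\end{equation*}
using symmetry of $A,B$ and cyclicity of the trace. The crucial observation is that $B$ is a rational function of $\layerInput\layerInput^\top$ and therefore commutes with it; writing $S:=\layerInput\layerInput^\top$, the matrices $B^{1/2}$ and $S^{1/2}$ commute and their product $B^{1/2}S^{1/2}$ squares to $BS$. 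Inserting $A^{1/2}A^{1/2}$ and $B^{1/2}S^{1/2}\cdot S^{1/2}B^{1/2}$ and reorganizing via cyclicity yields
\begin{equation*}
\langle \Zeta,\,\gradLoss^\top\layerInput\rangle_F \;=\; \bigl\| S^{1/2}B^{1/2}\,\gradLoss\,A^{1/2}\bigr\|_F^{\,2} \;\geq\; 0,
\end{equation*}
which is the desired sign statement and is manifestly independent of $\lambdaG,\lambdaX$. For the equality case, invertibility of $A^{1/2}$ and $B^{1/2}$ forces $S^{1/2}\gradLoss=0$, hence $\layerInput\layerInput^\top\gradLoss=0$ and therefore $(\gradLoss^\top\layerInput)(\gradLoss^\top\layerInput)^\top=0$, so $\gradLoss^\top\layerInput=0$: the inner product vanishes only when the layer gradient itself does.

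The main obstacle is Step 2: unlike $A$, the factor $B$ sits between $\gradLoss^\top$ and $\layerInput$ rather than to the left of the assembled gradient, so $\Zeta$ is not presented as a symmetric preconditioner applied to $\gradLoss^\top\layerInput$, and a naive $M\succ 0 \Rightarrow \langle Mg,g\rangle>0$ argument does not immediately apply. The fix is the commutativity of $B$ with $\layerInput\layerInput^\top$—a direct consequence of $B$ being built from $\layerInput\layerInput^\top$—which is exactly what allows the trace to be symmetrized into a squared Frobenius norm. Everything else is bookkeeping with Woodbury and the cyclic trace.
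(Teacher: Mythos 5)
Your proof is correct, but it takes a genuinely different (and more explicit) route than the paper's. The paper stays entirely in the vectorized Kronecker domain and needs only two lines: $(\lambdaG\mathbf{I}+\gradRand^\top\gradRand/\theB)^{-1}$ and $(\lambdaX\mathbf{I}+\layerInput^\top\layerInput/\theB)^{-1}$ are positive (semi-)definite, the Kronecker product of PSD matrices is PSD, and a PSD preconditioner applied to the vectorized gradient in \eqref{eq:def:zeta} cannot flip the sign of the inner product with the gradient. You instead verify the same quadratic-form inequality in the matrix domain: after the Woodbury/push-through simplification of \eqref{eq:zeta:thm1} to $\Zeta=A\,\gradLoss^\top B\,\layerInput$ with $A,B\succ0$, the commutativity of $B$ with $\layerInput\layerInput^\top$ is exactly what is needed to symmetrize $\mathrm{tr}\bigl(A\gradLoss^\top\layerInput\layerInput^\top B\gradLoss\bigr)$ into a squared Frobenius norm --- and this step is not optional, since a trace of a product of three PSD matrices need not be nonnegative without such a commutation. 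Your version buys two things the paper's does not: it avoids invoking the Kronecker PSD fact and the vectorization identity \eqref{eq:IHGP} altogether, working directly with the matrix-shaped update that is actually implemented; and it settles the equality case, showing the inner product is \emph{strictly} positive unless the layer gradient $\gradLoss^\top\layerInput$ itself vanishes, whereas the paper's appeal to positive semi-definiteness only yields a nonnegative inner product (in fact the regularized factors are positive definite for $\lambdaG,\lambdaX>0$, so the paper could have claimed strictness too, but does not). What the paper's argument buys in exchange is brevity and generality: it shows that \emph{any} Kronecker product of PSD factors is an admissible preconditioner, without using any structure of $A$ or $B$. Your opening reduction --- that ``direction of increasing loss'' means $\langle\Zeta,\nabla_{\theta^{(i)}}\mathcal{L}\rangle_F>0$ --- is consistent with the paper, since the update \eqref{eq:update:equation:zeta} steps along $-\eta^*\Zeta$.
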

  \begin{proof}
        Recall that $(\lambdaG \mathbf{I}_m+\gradRand^\top\gradRand/\theB)$ and $(\lambdaX \mathbf{I}_n + \layerInput^\top\layerInput/\theB)$ are positive semi-definite (PSD) matrices by definition.
        Their inverses $(\lambdaG \mathbf{I}_m+\gradRand^\top\gradRand/\theB)^{-1}$ and $(\lambdaX \mathbf{I}_n + \layerInput^\top\layerInput/\theB)^{-1}$ are therefore also PSD. 
        As the Kronecker product of PSD matrices is PSD, the conditioning matrix ($(\lambdaG \mathbf{I}_m+\gradRand^\top\gradRand/\theB)^{-1} \otimes (\lambdaX \mathbf{I}_n + \layerInput^\top\layerInput/\theB)^{-1} \approx\mathbf{G}^{-1}$) is PSD, and therefore the direction of the update step remains correct.
    \end{proof}

\textit{This leads us to our primary contribution:}
From our formulation of $\Zeta$, we can find that, in the limit for $\lambdaG{}\to\infty$, Equation~\eqref{eq:zeta-lambda-g-to-infty} does not depend on $\gradRand$.
This is computationally very beneficial as computing $\gradRand$ is costly as it requires one or even many additional backpropagation passes.
In addition, it allows conditioning the gradient update by multiplying a $b\times b$ matrix between $\gradLoss^\top$ and $\layerInput$, which is very fast.

\begin{theorem}[Efficient Update Direction / ISAAC] \label{thm:efficient:update:direction}
    In the limit of $\lambdaG{}\to\infty$, the update step $\Zeta$ converges to $\lim_{\lambdaG{}\to\infty}\Zeta = \Zeta^*$, where
    \begin{align} 
         \Zeta^* \!=\ & \gradLoss{}^\top \! \cdot \Bigg( \mathbf{I}_b - \frac{1}{\theB\lambdaX{}} \layerInput{}\layerInput{}^\top \left(\mathbf{I}_b + \frac{1}{\theB\lambdaX{}}\layerInput{}\layerInput{}^\top \right)^{-1} \Bigg) \cdot \layerInput{} \label{eq:zeta-lambda-g-to-infty} \,.
    \end{align}
    \begin{itemize}
    \item[(i)] Here, the update direction $\Zeta^*$ is based only on the inputs and does not require computing $\gradRand{}$ (which would require a second backpropagation pass), making it efficient.
    \item[(ii)] The computational cost of computing the update $\Zeta^*$ lies in $\mathcal{O}(bn^2+b^2n+b^3)$, where $n$ is the number of neurons in each layer.
    This comprises the conventional cost of computing the gradient $\nabla=\mathbf{g}^\top\mathbf{x}$ lying in $\mathcal{O}(bn^2)$, and the overhead of computing $\Zeta^*$ instead of $\nabla$ lying in $\mathcal{O}(b^2n+b^3)$. 
    The overhead is vanishing, assuming $n\gg b$. 
    For $b>n$ the complexity lies in $\mathcal{O}(bn^2+n^3)$.
    \end{itemize}
\end{theorem}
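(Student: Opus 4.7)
For part (i), I would start from the factorized expression for $\Zeta$ in \eqref{eq:zeta:thm1} and show that the leftmost parenthesized factor,
$A(\lambdaG{}) := \mathbf{I}_m - \frac{1}{\theB\lambdaG{}}\, \gradRand^\top \bigl(\mathbf{I}_b + \frac{1}{\theB\lambdaG{}}\gradRand\gradRand^\top\bigr)^{-1} \gradRand$,
converges to $\mathbf{I}_m$ as $\lambdaG{}\to\infty$. The key observation is that $\frac{1}{\theB\lambdaG{}}\gradRand\gradRand^\top$ is positive semi-definite, so $\bigl(\mathbf{I}_b + \frac{1}{\theB\lambdaG{}}\gradRand\gradRand^\top\bigr)^{-1}$ has spectral norm at most $1$ uniformly in $\lambdaG{}$. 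Multiplying by the prefactor $\tfrac{1}{\theB\lambdaG{}}\to 0$ and by the fixed matrices $\gradRand^\top$, $\gradRand$ forces the subtracted term to vanish in operator norm. Since the remaining factors $\gradLoss{}^\top$, $\bigl(\cdot\bigr)$, $\layerInput{}$ on the right do not depend on $\lambdaG{}$, the limiting expression $\Zeta^*$ in \eqref{eq:zeta-lambda-g-to-infty} follows.

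For part (ii), I would parse $\Zeta^* = \gradLoss{}^\top M \layerInput{}$ with $M := \mathbf{I}_b - \frac{1}{\theB\lambdaX{}}\layerInput{}\layerInput{}^\top \bigl(\mathbf{I}_b + \frac{1}{\theB\lambdaX{}}\layerInput{}\layerInput{}^\top\bigr)^{-1}$, using an explicit evaluation order. With $\gradLoss{}\in\mathbb{R}^{b\times n}$ and $\layerInput{}\in\mathbb{R}^{b\times n}$: forming $\layerInput{}\layerInput{}^\top$ costs $\mathcal{O}(b^2 n)$; the identity shift, the $b\times b$ inversion, and the extra $b\times b$ product together cost $\mathcal{O}(b^3)$; evaluating $\gradLoss{}^\top M$ then $(\gradLoss{}^\top M)\layerInput{}$ costs $\mathcal{O}(b^2 n + b n^2)$. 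Summing yields $\mathcal{O}(b n^2 + b^2 n + b^3)$, of which the $\mathcal{O}(b n^2)$ term coincides with the cost of the bare gradient $\gradLoss{}^\top\layerInput{}$, leaving the claimed $\mathcal{O}(b^2 n + b^3)$ overhead, which vanishes asymptotically when $n\gg b$.

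For the regime $b>n$, the $b\times b$ inverse becomes the dominant cost. I would first apply the Woodbury identity to simplify $M = \bigl(\mathbf{I}_b + \frac{1}{\theB\lambdaX{}}\layerInput{}\layerInput{}^\top\bigr)^{-1}$, and then invoke the push-through identity
\begin{equation*}
M\,\layerInput{} \;=\; \layerInput{}\,\bigl(\mathbf{I}_n + \tfrac{1}{\theB\lambdaX{}}\,\layerInput{}^\top\layerInput{}\bigr)^{-1},
\end{equation*}
which replaces the $b\times b$ inverse by an $n\times n$ inverse. Computing $\layerInput{}^\top\layerInput{}$ costs $\mathcal{O}(b n^2)$, the $n\times n$ inversion costs $\mathcal{O}(n^3)$, and the remaining left multiplications by $\layerInput{}$ and then $\gradLoss{}^\top$ are absorbed into $\mathcal{O}(b n^2)$, giving the stated $\mathcal{O}(b n^2 + n^3)$.

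The main obstacle is not technical depth but careful matrix-size bookkeeping: one must choose the evaluation order that avoids forming any $n\times n$ or $b\times b$ object larger than necessary, and must verify that the Woodbury/push-through reformulation correctly swaps the roles of $b$ and $n$ in the dominant inverse. Once the order is fixed in each regime, the algebra is direct and the dominant operations are precisely the three claimed terms.
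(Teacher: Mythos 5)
Your proof is correct and follows essentially the same route as the paper: show that the left factor tends to $\mathbf{I}_m$ as $\lambdaG{}\to\infty$ (the paper does this on the pre-Woodbury form via $\lambdaG{}\left(\lambdaG{}\mathbf{I}_m+\gradRand^\top\gradRand/\theB\right)^{-1}=\left(\mathbf{I}_m+\tfrac{1}{\theB\lambdaG{}}\gradRand^\top\gradRand\right)^{-1}\to\mathbf{I}_m$, while you bound the subtracted term of the Woodbury form \eqref{eq:zeta:thm1} in operator norm; both are valid and equivalent), and then count the costs of the $b\times b$ products for part~(ii) exactly as the paper does. The one substantive addition is your push-through argument $\left(\mathbf{I}_b+\tfrac{1}{\theB\lambdaX{}}\layerInput{}\layerInput{}^\top\right)^{-1}\layerInput{}=\layerInput{}\left(\mathbf{I}_n+\tfrac{1}{\theB\lambdaX{}}\layerInput{}^\top\layerInput{}\right)^{-1}$ for the $b>n$ regime, which the paper asserts but does not prove; your identity is correct and cleanly justifies the $\mathcal{O}(bn^2+n^3)$ claim.
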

     \begin{proof}
    We first show the property \eqref{eq:zeta-lambda-g-to-infty}. Note that according to \eqref{eq:lambda-g-to-infty}, $\lambdaG \cdot \left(\lambdaG \mathbf{I}_m+\gradRand^\top\gradRand/\theB\right)^{-1}$ converges in the limit of $\lambdaG\to\infty$ to $\mathbf{I}_m$, and therefore \eqref{eq:zeta-lambda-g-to-infty} holds.\\
        (i) The statement follows from the fact that the term $\gradRand{}$ does not appear in the equivalent characterization \eqref{eq:zeta-lambda-g-to-infty} of $\Zeta^*$. \\
        (ii) We first note that the matrix $\layerInput{}\layerInput{}^\top$ is of dimension $b\times b$, and can be computed in $\mathcal{O}(b^2n)$ time. Next, the matrix 
        \begin{equation*}
        \left( \mathbf{I}_b - \frac{1}{\theB\lambdaX{}} \layerInput{}\layerInput{}^\top \left(\mathbf{I}_b + \frac{1}{\theB\lambdaX{}}\layerInput{}\layerInput{}^\top \right)^{-1} \right)
        \end{equation*}
        is of shape $b\times b$ and can be multiplied with $\layerInput$ in $\mathcal{O}(b^2n)$ time.
    \end{proof}

Notably, \eqref{eq:zeta-lambda-g-to-infty} can be computed with a vanishing computational overhead and with only minor modifications to the implementation. 
Specifically, only the $\gradLoss^\top\layerInput$ expression has to be replaced by \eqref{eq:zeta-lambda-g-to-infty} in the backpropagation step.
As this can be done independently for each layer, this lends itself also to applying it only to individual layers.

As we see in the experimental section, in many cases in the mini-batch regime (i.e., $b<n$), the optimal (or a good) choice for $\lambdaG$ actually lies in the limit to $\infty$.
This is a surprising result, leading to the efficient and effective $\Zeta^* = \Zeta_{\lambdaG{}\to\infty}$ optimizer.

\begin{remark}[Relation between Update Direction $\Zeta$ and $\Zeta^*$] 
When comparing the update direction $\Zeta$ in \eqref{eq:zeta:thm1} without regularization (i.e., $\lambdaG{}\to0, \lambdaX{}\to0$) with $\Zeta^*$ (i.e., $\lambdaG{}\to\infty$) as given in \eqref{eq:zeta-lambda-g-to-infty}, it can be directly seen that $\Zeta^*$ corresponds to a particular pre-conditioning of $\Zeta$, since $\Zeta^*=M\Zeta$ for
$
    M = \frac{1}{\theB\lambdaG}\,\gradRand^\top\gradRand.
$
\end{remark}

As the last theoretical property of our proposed update direction $\Zeta^*$, we show that in specific networks $\Zeta^*$ coincides with the Gauss-Newton update direction.

\begin{figure*}[h]
    \centering
    \hspace*{-0.0425\linewidth}\includegraphics[width=1.08\linewidth]{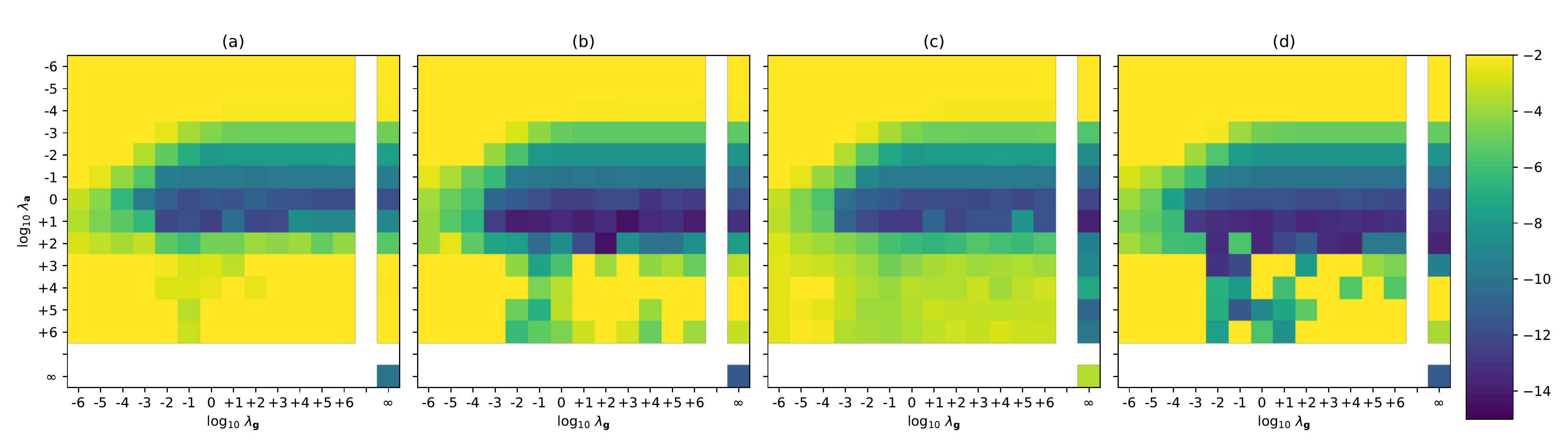}\\[-.3em]
    \hspace*{-0.0425\linewidth}\includegraphics[width=1.08\linewidth]{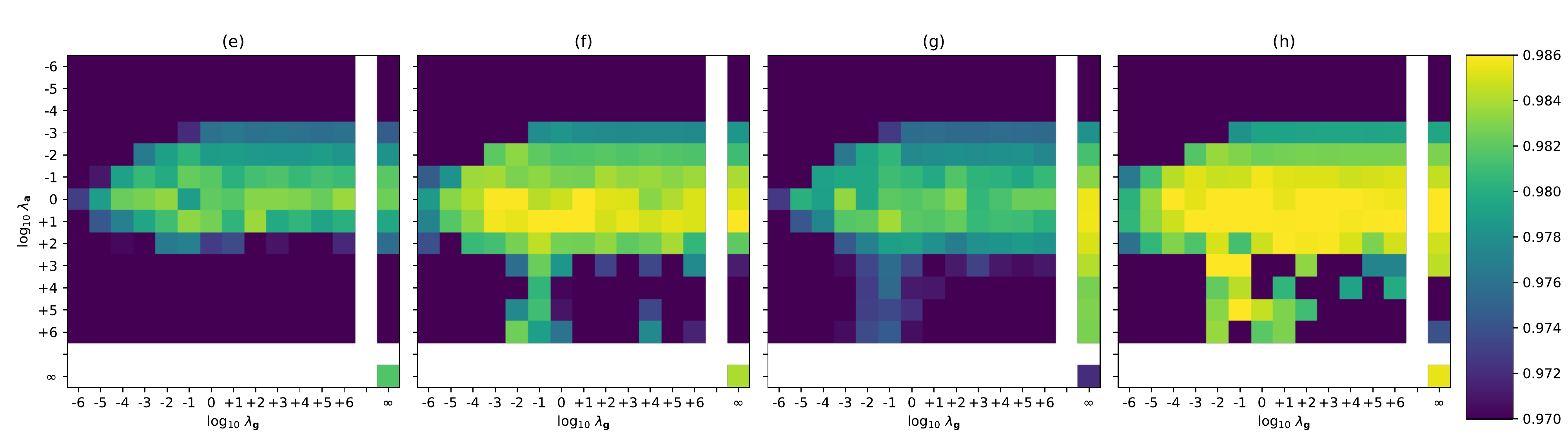}\\[-.75em]
    \caption{
    Logarithmic training loss (top) and test accuracy (bottom) on the MNIST classification task. The axes are the regularization parameters $\lambdaG$ and $\lambdaX$ in logarithmic scale with base $10$.
    Training with a 5-layer ReLU activated network with $100$ (left, a, e), $400$ (center, b, c, f, g), and $1\,600$ (right, d, h) neurons per layer.
    The optimizer is SGD except for (c, g) where the optimizer is SGD with momentum.
    The top-left sector is $\Zeta$, the top-right column is $\Zeta^*$, and the bottom-right corner is $\nabla$ (gradient descent).
    For each experiment and each of the three sectors, we use one learning rate, i.e., $\Zeta$, $\Zeta^*$, $\nabla$ have their own learning rate to make a fair comparison between the methods; within each sector the learning rate is constant.
    We can observe that in the limit of $\lambdaG\to\infty$ (i.e., in the limit to the right) the performance remains good, showing the utility of $\Zeta^*$.
    }
    \label{fig:E1:classification:regularizers}
\end{figure*}

\begin{theorem}[$\Zeta^*$ is Exact for the Last Layer] \label{thm:last:layer}
    For the case of linear regression or, more generally, the last layer of networks, with the mean squared error, $\Zeta^*$ is the Gauss-Newton update direction.
\end{theorem}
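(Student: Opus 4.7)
My plan is to organize the argument around two observations: in this setting the K-FAC factorization \eqref{eq:fisher:2} ceases to be an approximation, and the $\lambdaG\to\infty$ limit is harmless because the $\gradRand$-factor is already a constant scalar multiple of the identity. Concretely, for MSE loss on a linear output layer one has $\nabla_f^2\mathcal{L}(f(x;\theta),y) = \mathbf{I}_m$ and $\mathbf{J}_{z_L}f(x;\theta) = \mathbf{I}_m$, so the per-sample $\gradRandV$ defined just above Remark~\ref{remark-mc-g-bar} equals $\mathbf{I}_m$ independently of $(x,y)$; hence $\gradRand^\top\gradRand = \theB\,\mathbf{I}_m$. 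Because $\gradRandV$ is deterministic across samples, the expectation of the Kronecker products in \eqref{eq:G-expectation-of-kronecker} factors exactly, so \eqref{eq:fisher:2} holds with equality and, using the batch estimator, $\mathbf{G} = \mathbf{I}_m\otimes(\layerInput^\top\layerInput/\theB)$ is the true GGN rather than merely its K-FAC surrogate.

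Next I would simplify $\Zeta^*$ from \eqref{eq:zeta-lambda-g-to-infty}. Setting $M=\layerInput\layerInput^\top/(\theB\lambdaX)$, the elementary identity $\mathbf{I}_b - M(\mathbf{I}_b + M)^{-1} = (\mathbf{I}_b+M)^{-1}$ collapses the middle factor, and the push-through identity $(\mathbf{I}_b + UV)^{-1}U = U(\mathbf{I}_n + VU)^{-1}$ with $U=\layerInput$ and $V=\layerInput^\top/(\theB\lambdaX)$ then yields
\begin{equation*}
    \Zeta^* \;=\; \lambdaX\,\gradLoss^\top\layerInput\,(\layerInput^\top\layerInput/\theB + \lambdaX\mathbf{I}_n)^{-1}.
\end{equation*}
Combining with the exact $\mathbf{G}$ from the previous paragraph via \eqref{eq:IHGP} identifies $\Zeta^*$ with $\lambdaX\cdot(\mathbf{G}+\lambdaX\mathbf{I})^{-1}\nabla_{\theta^{(i)}}\mathcal{L}$, i.e., $\lambdaX$ times the Tikhonov-regularized Gauss-Newton step with regularization $\lambdaX$; letting $\lambdaX\to 0$ additionally recovers the unregularized Gauss-Newton direction exactly.

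Since the prefactor $\lambdaX$ is a single positive scalar, it is absorbed into the effective learning rate $\eta^*$ and does not change the direction, which is what the theorem claims. The step I expect to need the most care is justifying that this $\lambdaX$ prefactor is genuinely a global rescaling rather than a data-dependent one: this ultimately relies on $\gradRandV$ being a constant multiple of the identity in the MSE / linear-output case, so the $\gradRand$-side of \eqref{eq:zeta-lambda-g-to-infty} trivially commutes with and absorbs into the Tikhonov regularization. The remaining manipulations --- the $(\mathbf{I}+M)^{-1}$ collapse, the push-through identity, and the vec-level reading of \eqref{eq:IHGP} --- are routine algebra.
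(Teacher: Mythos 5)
Your proof is correct and rests on the same key observation as the paper's: for MSE on a linear output layer the loss Hessian is the identity, so $\gradRandV=\mathbf{I}_m$ and $\gradRand^\top\gradRand/\theB=\mathbf{I}_m$, making the $\gradRand$-factor trivial. The paper's proof stops essentially there (``the expectation of $\gradRand^\top\gradRand$ is $\mathbf{I}$, thus $\Zeta^*=\Zeta$''), whereas you supply two steps it leaves implicit and which are genuinely needed for the stated conclusion: first, that because $\gradRandV$ is deterministic the K-FAC factorization \eqref{eq:fisher:2} holds with equality, so $\Zeta$ is the \emph{true} (regularized) Gauss-Newton step here and not merely its K-FAC surrogate; second, the explicit collapse of \eqref{eq:zeta-lambda-g-to-infty} via $\mathbf{I}_b-M(\mathbf{I}_b+M)^{-1}=(\mathbf{I}_b+M)^{-1}$ and the push-through identity to $\Zeta^*=\lambdaX\,\gradLoss^\top\layerInput\,(\layerInput^\top\layerInput/\theB+\lambdaX\mathbf{I}_n)^{-1}=\lambdaX\,(\mathbf{G}+\lambdaX\mathbf{I})^{-1}\nabla_{\theta^{(i)}}\mathcal{L}$, which makes precise in what sense the result is the Gauss--Newton direction (Tikhonov-regularized for finite $\lambdaX$, exact as $\lambdaX\to0$, with the scalar $\lambdaX$ absorbed into the step size). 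The only implicit assumption you should state is that the output activation is the identity so that $\mathbf{J}_{z_L}f=\mathbf{I}_m$; this is standard for regression and clearly intended by the theorem, but it is what makes $\gradRandV=\mathbf{I}_m$ rather than merely $\nabla^2_f\mathcal{L}=\mathbf{I}_m$.
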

        \begin{proof}
        The Hessian matrix of the mean squared error loss is the identity matrix.
        Correspondingly, the expectation value of $\gradRand^\top\gradRand$ is $\mathbf{I}$.
        Thus, $\Zeta^*=\Zeta$.
    \end{proof}

\begin{remark}[]
   The direction $\Zeta^*$ corresponds to the Gauss-Newton update direction with an approximation of $\mathbf{G}$ that can be expressed as 
   $
        \mathbf{G} \approx \mathbb{E}\left[ \mathbf{I} \otimes (\layerInputV^\top \layerInputV) \right].
    $
\end{remark}

\begin{remark}[Extension to the Natural Gradient]\label{thm:fisher:extension}
    In some cases, it might be more desirable to use the Fisher-based natural gradient instead of the Gauss-Newton method. 
    The difference to this setting is that in \eqref{eq:Netwon:method:with:GGN} the GGN matrix $\mathbf{G}$ is replaced by the empirical Fisher information matrix $\mathbf{F}$. 
    
    We note that our theory also applies to $\mathbf{F}$, and that $\Zeta^*$ also efficiently approximates the natural gradient update step $\mathbf{F}^{-1}\nabla$. The $i$-th diagonal block of $\mathbf{F}$
    (\(\displaystyle
    \mathbf{F}_{\theta^{(i)}} = \mathbb{E} \big[ (\gradLossV_i^\top \gradLossV_i) \otimes (a_{i-1}^\top a_{i-1}) \big]
    \)),\\
    has the same form as a block of the GGN matrix $\mathbf{G}$
    (\(\displaystyle
    \mathbf{G}_{\theta^{(i)}} = \mathbb{E} \big[ (\gradRandV_i^\top \gradRandV_i) \otimes (a_{i-1}^\top a_{i-1}) \big]
    \)).\\
    Thus, we can replace $\gradRand$ with $\gradLoss$ in our theoretical results to obtain their counterparts for $\mathbf{F}$.
\end{remark}

\newcommand{\customsubsec}[1]{\vspace{-.5em}\subsection*{#1}\vspace{-.25em}}

\section{Experiments\protect\footnote{Code will be made available at~{\color{black!50!blue}\href{https://github.com/Felix-Petersen/isaac}{github.com/Felix-Petersen/isaac}}}}\label{sec:experiments}

In the previous section, we discussed the theoretical properties of the proposed update directions $\Zeta$ and $\Zeta^*$ with the aspect that $\Zeta^*$ would actually be ``free'' to compute in the mini-batch regime.
In this section, we provide empirical evidence that $\Zeta^*$ is a good update direction, even in deep learning.
Specifically, we demonstrate that 
\begin{itemize} 
    \item[(E1)] $\Zeta^*$ achieves similar performance to K-FAC, while being substantially cheaper to compute.
    \item[(E2)] The performance of our proposed method can be empirically maintained in the mini-batch regime ($n\gg b$).
    \item[(E3)] $\Zeta^*$ may be used for individual layers, while for other layers only the gradient $\nabla$ is used. This still leads to improved performance.
    \item[(E4)] $\Zeta^*$ also improves the performance for training larger models such as BERT and ResNet.
    \item[(E5)] The runtime and memory requirements of $\Zeta^*$ are comparable to those of gradient descent.
\end{itemize}

\customsubsec{E1: Impact of Regularization Parameters}
For (E1), we study the dependence of the model's performance on the regularization parameters $\lambdaG{}$ and $\lambdaX{}$.
Here, we train a 5-layer deep neural network on the MNIST classification task~\cite{LeCun2010-mnist} with a batch size of $60$ for a total of $40$ epochs or $40\,000$ steps.

The plots in Figure~\ref{fig:E1:classification:regularizers} demonstrate that the advantage of training by conditioning with curvature information can be achieved by considering both layer inputs~$\layerInput$ and gradients with respect to~random samples~$\gradRand$, but also using only layer inputs $\layerInput$.
In the plot, we show the performance of $\Zeta$ for different choices of $\lambdaG{}$ and $\lambdaX{}$, each in the range from $10^{-6}$ to $10^6$.
The right column shows $\Zeta^*$, i.e., $\lambdaG{}=\infty$, for different $\lambdaX{}$.
The bottom-right corner is gradient descent, which corresponds to $\lambdaG{}=\infty$ and $\lambdaX{}=\infty$.

Newton's method or the general K-FAC approximation corresponds to the area with small $\lambdaG{}$ and $\lambdaX{}$.
The interesting finding here is that the performance does not suffer by increasing $\lambdaG{}$ toward $\infty$, i.e., from left to right in the plot.

\begin{figure*}[t]
    \centering
    \includegraphics[width=.5\linewidth]{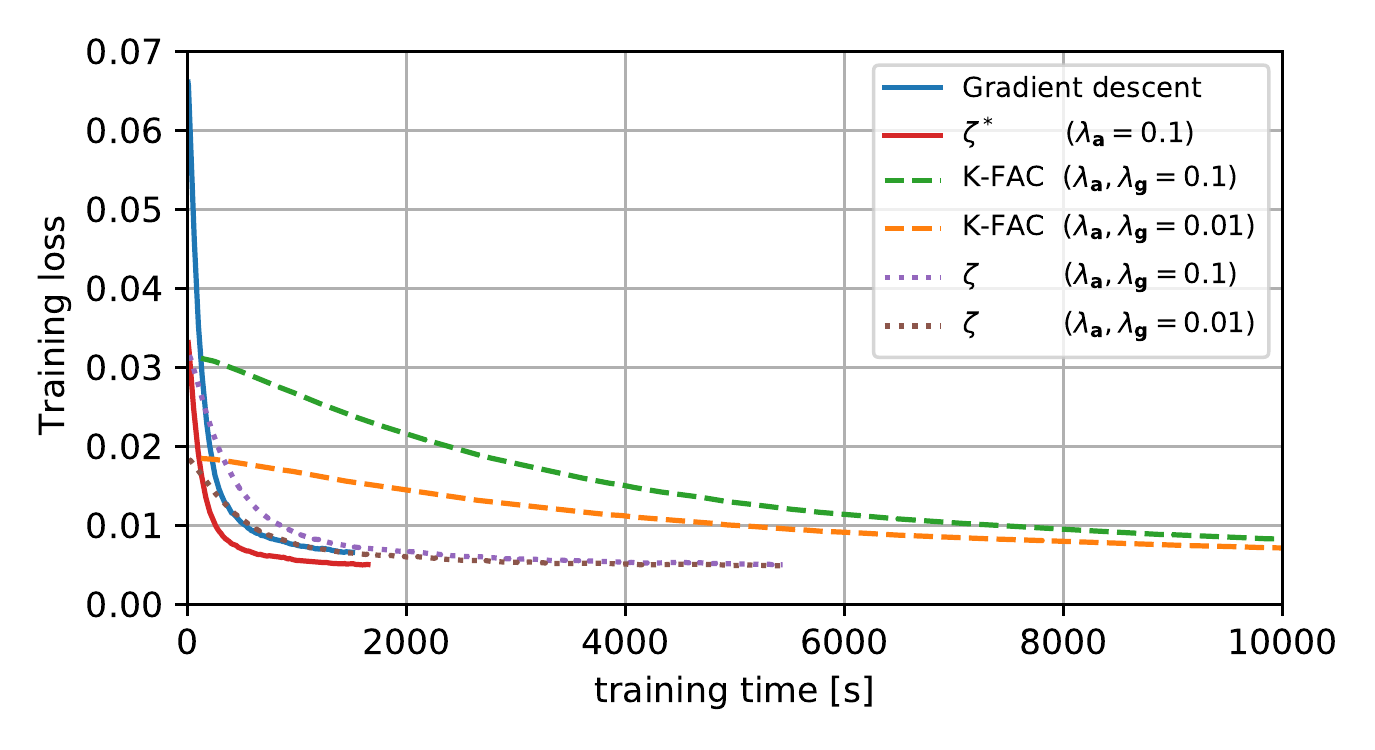}%
    \includegraphics[width=.5\linewidth]{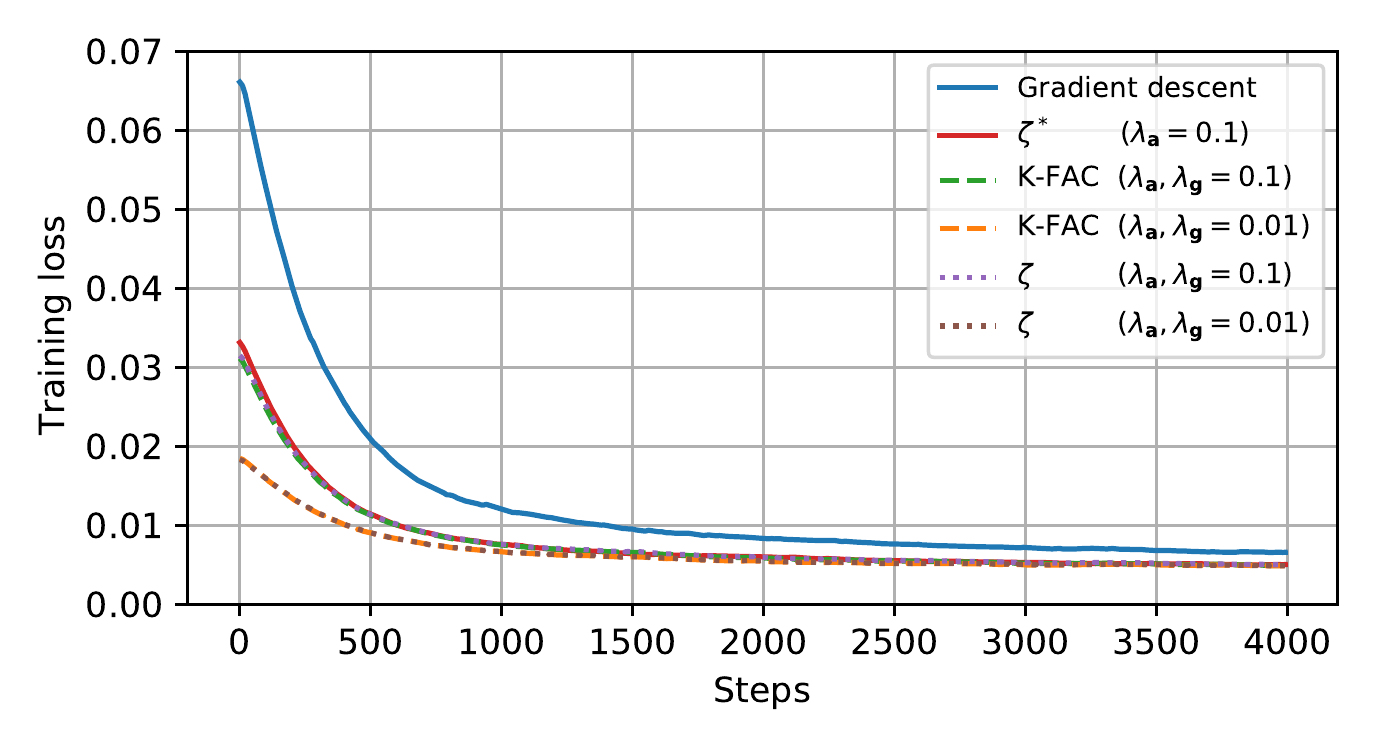}%
    \vspace{-1em}%
    \caption{%
    Training loss of the MNIST auto-encoder trained with gradient descent, K-FAC, $\Zeta$, and $\Zeta^*$.
    Comparing the performance per real-time (left) and per number of update steps (right).
    Runtimes are for a CPU core.
    }
    \label{fig:E1:autoencoder:loss:plot}
    \vspace{-1em}%
\end{figure*}

In addition, in Figure~\ref{fig:E1:regression:autoencoders}, we consider the case of regression with an auto-encoder trained with the MSE loss on MNIST \cite{LeCun2010-mnist} and Fashion-MNIST~\cite{Xiao2017-FashionMNIST}.
Here, we follow the same principle as above and also find that $\Zeta^*$ performs well.

\begin{wrapfigure}[28]{r}{.59\linewidth}
    \centering
    \vspace{-1em}%
    \includegraphics[width=\linewidth]{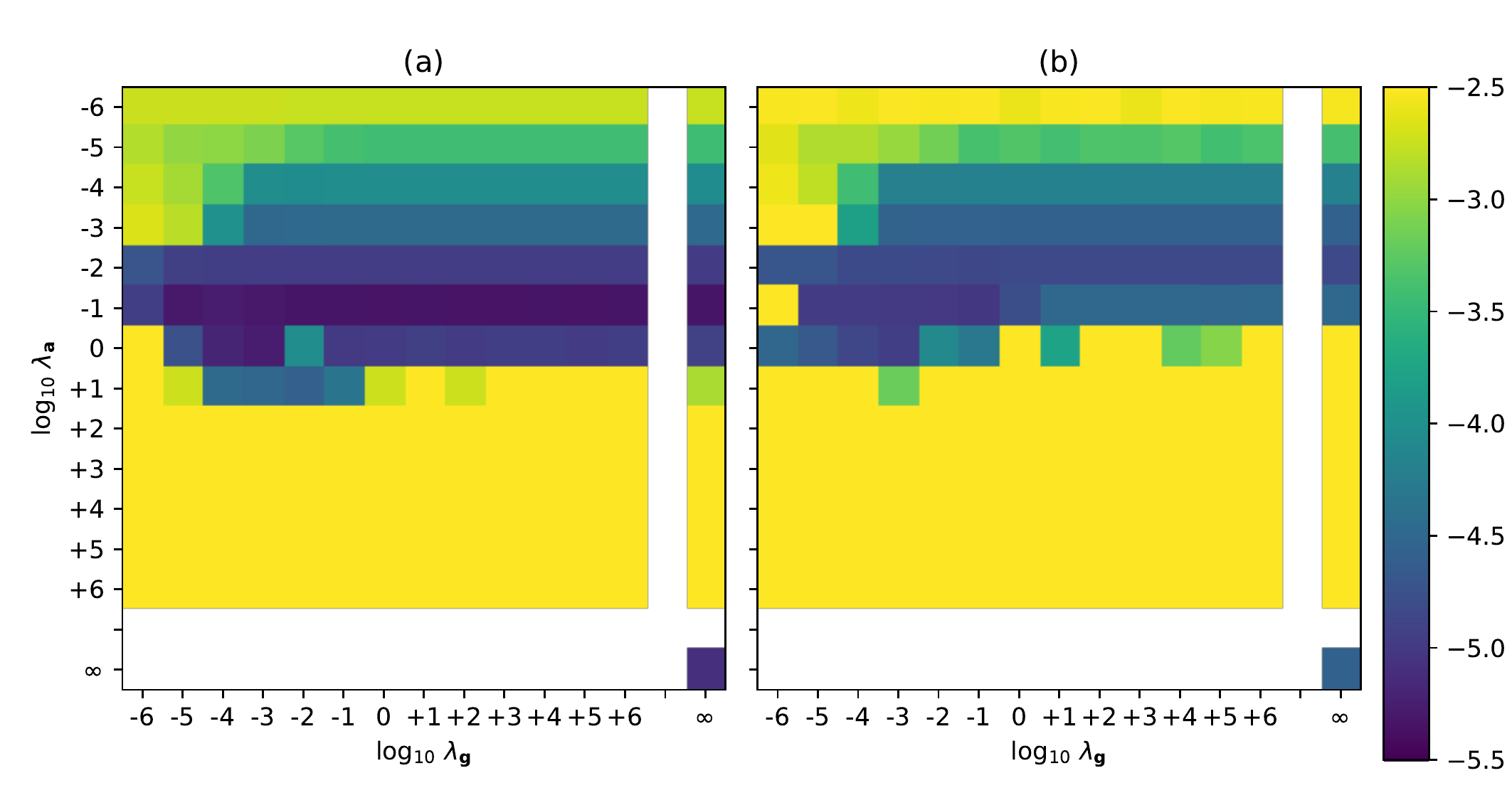}%
    \vspace{-1em}%
    \caption{%
    Training an auto-encoder on MNIST (left) and Fashion-MNIST (right). 
    The model is the same as used by Botev~\textit{et al.}~\cite{botev2017practical}, i.e., it is a ReLU-activated 6-layer fully connected model with dimensions \texttt{784-1000-500-} \texttt{30-500-1000-784}.
    Displayed is the logarithmic training loss. 
    \label{fig:E1:regression:autoencoders}
    }
    \centering
    \includegraphics[width=.92\linewidth]{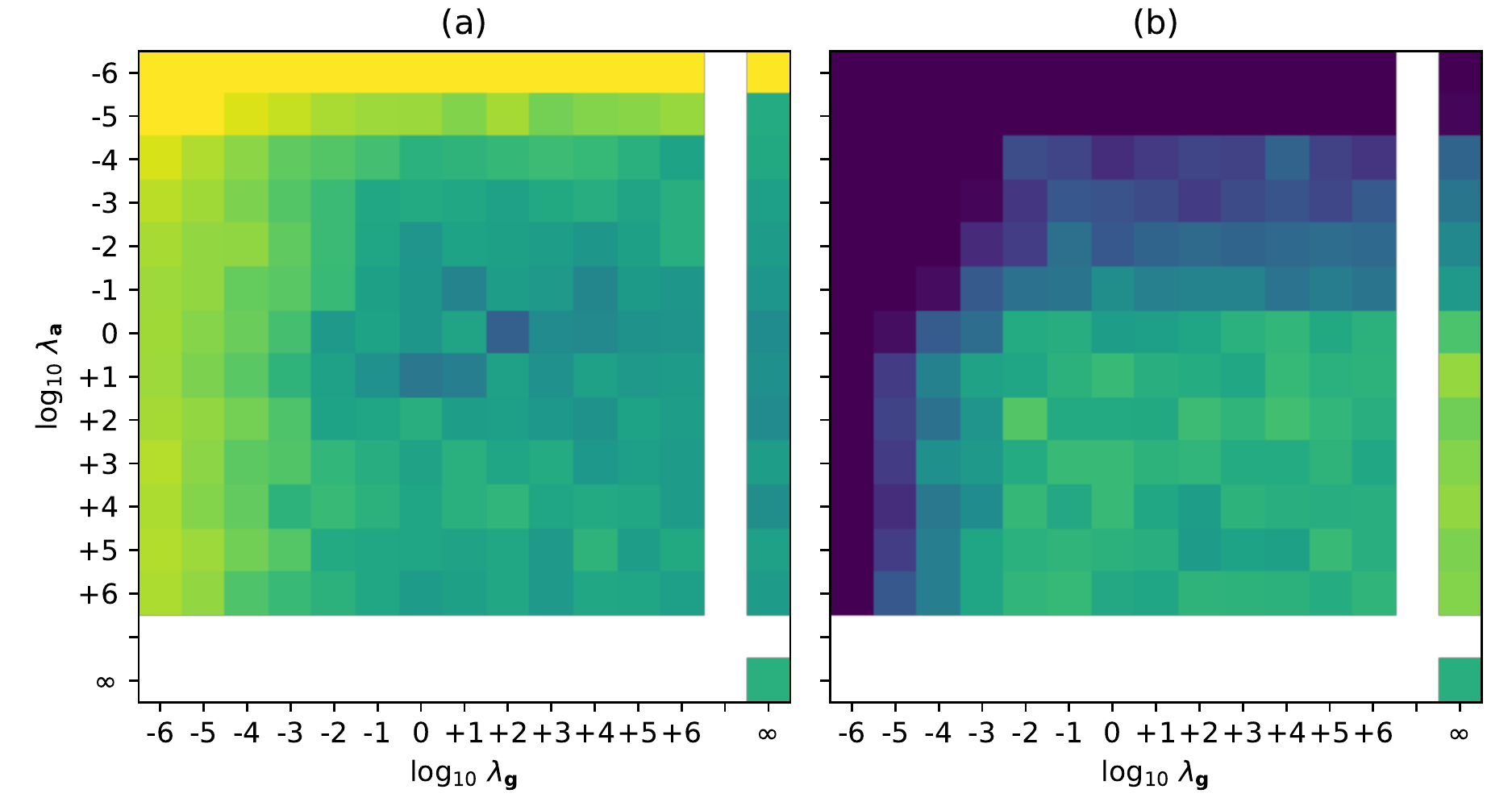}\hfill%
    \vspace{-.5em}%
    \caption{%
    Training a 5-layer ReLU network with $400$ neurons per layer on the MNIST classification task (as in Figure~\ref{fig:E1:classification:regularizers}) but with the Adam optimizer~\cite{Kingma2014AdamOpt}.
    \label{fig:E1:classification:adam}
    }
\end{wrapfigure}

In Figure~\ref{fig:E1:autoencoder:loss:plot}, we compare the loss for different methods.
Here, we distinguish between loss per time (left) and loss per number of steps (right).
We can observe that, for $\lambda=0.1$, K-FAC, $\Zeta$, and $\Zeta^*$ are almost identical per update step (right), while $\Zeta^*$ is by a large margin the fastest, followed by $\Zeta$, and the conventional K-FAC implementation is the slowest (left).
On the other hand, for $\lambda=0.01$ we can achieve a faster convergence than with $\lambda=0.1$, but here only the K-FAC and $\Zeta$ methods are numerically stable, while $\Zeta^*$ is unstable in this case.
This means in the regime of very small $\lambda$, $\Zeta^*$ is not as robust as K-FAC and $\Zeta$, however, it achieves good performance with small but moderate $\lambda$ like $\lambda=0.1$.  
For $\lambda<0.01$, also K-FAC and $\Zeta$ become numerically unstable in this setting and, in general, we observed that the smallest valid $\lambda$ for K-FAC is $0.01$ or $0.001$ depending on model and task.
Under consideration of the runtime, $\Zeta^*$ performs best as it is almost as fast as gradient descent while performing equivalent to K-FAC and $\Zeta$.
Specifically, a gradient descent step is only about $10\%$ faster than $\Zeta^*$.

\customsubsec{E2: Minibatch Regime}

\begin{wrapfigure}[11]{r}{.5\linewidth}
    \centering
    \vspace*{-4.5em}%
    \includegraphics[width=\linewidth]{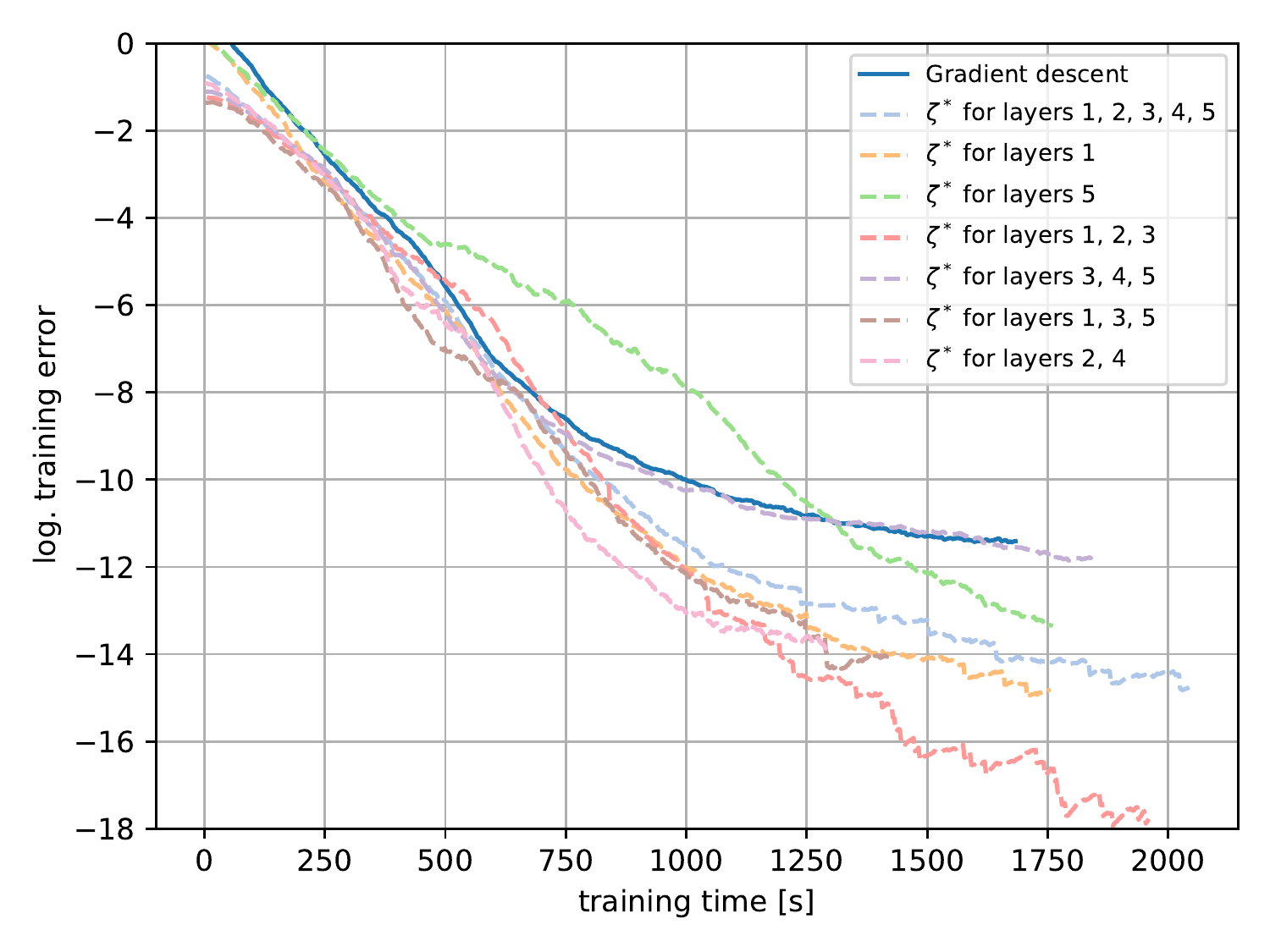}%
    \vspace*{-1em}%
    \caption{%
    Training on the MNIST classification task using $\Zeta^*$ only in selected layers.
    Runtimes are for CPU.
    }
    \label{fig:E3:zeta-star-layers}
\end{wrapfigure}

For (E2), in Figure~\ref{fig:E1:classification:regularizers}, we can see that training performs well for $n\in\{100, 400, 1\,600\}$ neurons per layer at a batch size of only $60$.
Also, in all other experiments, we use small batch sizes of between $8$ and $100$.

\customsubsec{E3: $\Zeta^*$ in Individual Layers}
In Figure~\ref{fig:E3:zeta-star-layers}, we train the 5-layer fully connected model with $400$ neurons per layer.
Here, we consider the setting that we use $\Zeta^*$ in some of the layers while using the default gradient $\nabla$ in other layers.
Specifically, we consider the settings, where all, the first, the final, the first three, the final three, the odd numbered, and the even numbered layers are updated by $\Zeta^*$.
We observe that all settings with $\Zeta^*$ perform better than plain gradient descent, except for ``$\Zeta^*$ for layers 3,4,5'' which performs approximately equivalent to gradient descent.

\customsubsec{E4: Large-scale Models}

\begin{table*}[t]
    \caption{
    BERT results for fine-tuning pre-trained BERT-Base (B-B) and BERT-Mini (B-M) models on the COLA, MRPC, and STSB text classification tasks.
    Larger values are better for all metrics. 
    MCC is the Matthews correlation.
    Results averaged over 10 runs.
    }
    \label{tab:bert}
    \centering
    {
    \scriptsize
    \addtolength{\tabcolsep}{-3pt}
    \begin{tabular}{lcccccccccccccccccccccccccc}
\toprule
Method / Setting       & CoLA (B-B) & CoLA (B-M) & \multicolumn{2}{c}{MRPC (B-B)} & \multicolumn{2}{c}{STS-B (B-M)}   \\
\cmidrule(r){1-1}
\cmidrule(r){2-2}
\cmidrule(r){3-3}
\cmidrule(r){4-5}
\cmidrule(r){6-7}
$\qquad\quad~~\,$ Metric & MCC & MCC & Acc. & F1 & Pearson & Spearman \\
\midrule
Gradient baseline & $54.20 \pm 7.56$ & $21.08 \pm 2.88$ & $82.52 \pm 1.22$ & $87.88 \pm 0.74$ & $76.98 \pm 1.10$ & $76.88 \pm 0.79$ \\
$\Zeta^*$         & $57.62 \pm 1.59$ & $24.67 \pm 2.62$ & $83.28 \pm 0.89$ & $88.28 \pm 0.70$ & $81.09 \pm 1.58$ & $80.82 \pm 1.57$ \\
\bottomrule
    \end{tabular}%
    }
\end{table*}

\paragraph{BERT}
To demonstrate the utility of $\Zeta^*$ also in large-scale models, we evaluate it for fine-tuning BERT~\cite{devlin2018bert} on three natural language tasks.
In Table~\ref{tab:bert}, we summarize the results for the BERT fine-tuning task.
For the ``Corpus of Linguistic Acceptability'' (CoLA) \cite{warstadt-etal-2019-neural} data set, we fine-tune both the BERT-Base and the BERT-Mini models and find that we outperform the gradient descent baseline in both cases.
For the ``Microsoft Research Paraphrase Corpus'' (MRPC) \cite{dolan2005automatically} data set, we fine-tune the BERT-Base model and find that we outperform the baseline both in terms of accuracy and F1-score.
Finally, on the ``Semantic Textual Similarity Benchmark'' (STS-B) \cite{cer-etal-2017-semeval} data set, we fine-tune the BERT-Mini model and achieve higher Pearson and Spearman correlations than the baseline.
While for training with CoLA and MRPC, we were able to use the Adam optimizer \cite{Kingma2014AdamOpt} (which is recommended for this task and model) in conjunction with $\Zeta^*$ in place of the gradient, for STS-B Adam did not work well.
Therefore, for STS-B, we evaluated it using the SGD with momentum optimizer.
For each method, we performed a grid search over the hyperparameters.
We note that we use a batch size of $8$ in all BERT experiments.

\begin{wrapfigure}[16]{r}{.5\linewidth}
    \centering
    \vspace*{-2em}%
    \includegraphics[width=\linewidth]{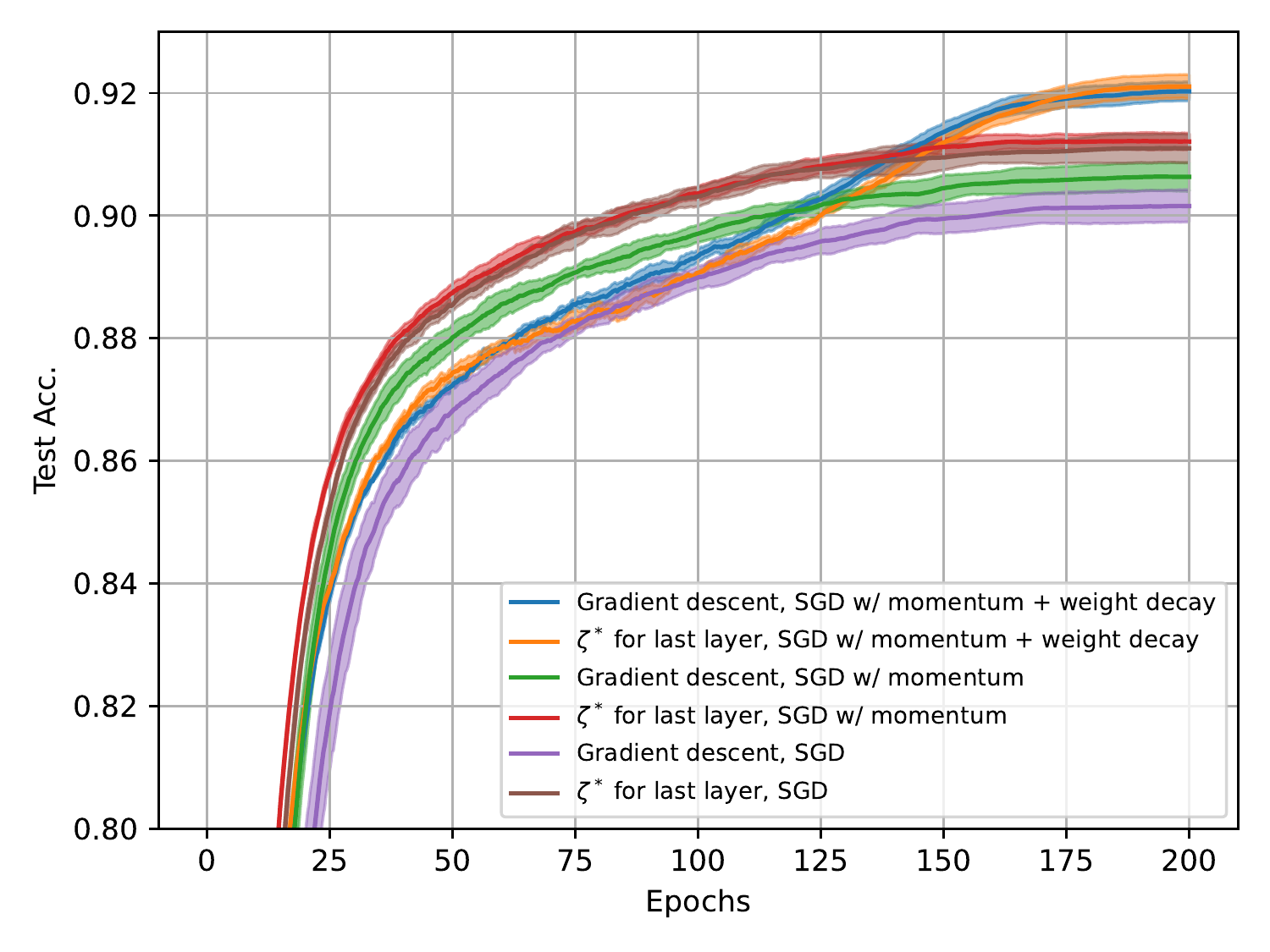}%
    \vspace*{-1em}%
    \caption{%
    ResNet-18 trained on CIFAR-10 with image augmentation and a cosine learning rate schedule. 
    To ablate the optimizer, two additional settings are added, specifically, without weight decay and without momentum. 
    Results are averaged over 5 runs and the standard deviation is indicated with the colored areas.
    }
    \label{fig:resnet-2}
\end{wrapfigure}
\paragraph{ResNet}
In addition, we conduct an experiment where we train the last layer of a ResNet with $\Zeta^*$, while the remainder of the model is updated using the gradient $\nabla$.
Here, we train a ResNet-18~\cite{he2016deep} on CIFAR-10~\cite{Krizhevsky2009_cifar10} using SGD with a batch size of $100$.
In Figure~\ref{fig:resnet-2}, we plot the test accuracy against number of epochs.
The times for each method lie within $1\%$ of each other. 
We consider three settings: the typical setting with momentum and weight decay, a setting with only momentum, and a setting with vanilla SGD without momentum.
The results show that the proposed method outperforms SGD in each of these cases. 
While the improvements are rather small in the case of the default training, they are especially large in the case of no weight decay and no momentum.

\customsubsec{E5: Runtime and Memory}

Finally, we also evaluate the runtime and memory requirements of each method.
The runtime evaluation is displayed in Table~\ref{tab:runtime-memory}.
We report both CPU and GPU runtime using PyTorch \cite{2019-PyTorch} and (for K-FAC) the backpack library \cite{dangel2020backpack}.
Note that the CPU runtime is more representative of the pure computational cost, as for the first rows of the GPU runtime the overhead of calling the GPU is dominant.
When comparing runtimes between the gradient and $\Zeta^*$ on the GPU, we can observe that we have an overhead of around $2.5\,s$ independent of the model size.
The overhead for CPU time is also very small at less than $1\%$ for the largest model, and only $1.3\,s$ for the smallest model.
In contrast, the runtime of $\Zeta^*$ is around $4$ times the runtime of the gradient, and K-FAC has an even substantially larger runtime.
Regarding memory, $\Zeta^*$ (contrasting the other approaches) also requires only a small additional footprint.

\begin{table*}[h!]
    \vspace{-.75em}
    \caption{
    Runtimes and memory requirements for different models. 
    Runtime is the training time per epoch on MNIST at a batch size of $60$, i.e., for $1\,000$ training steps. 
    The K-FAC implementation is from the \texttt{backpack} library \cite{dangel2020backpack}.
    The GPU is an Nvidia A6000.
    }
    \vspace{-.5em}
    \label{tab:runtime-memory}
    \centering
    \newcommand{\rtbold}[1]{{\pmb{#1}}}
    \resizebox{1.\linewidth}{!}%
    {
    \scriptsize
    \addtolength{\tabcolsep}{-4pt}
    \begin{tabular}{lrrrrrrrrrrrrrrrrrrr}
\toprule 
                & \multicolumn{3}{c}{Gradient}                    & \multicolumn{3}{c}{K-FAC}    & \multicolumn{3}{c}{$\Zeta$}                    & \multicolumn{3}{c}{$\Zeta^*$}                                                       \\
            \cmidrule(r){2-4}\cmidrule(r){5-7}\cmidrule(r){8-10}\cmidrule(r){11-13}
        Model     & CPU time & GPU time & Memory           & CPU time & GPU t. & Memory           & CPU time & GPU t. & Memory           & CPU t. & GPU t. & Memory              \\
        \midrule 
$5$ layers w/ $100$ n. & $2.05 \,s$ & $1.79 \,s$ & $1.0 \,\mathrm{MB}$ & $62.78 \,s$ & $17.63 \,s$ & $11.5 \,\mathrm{MB}$ & $8.65 \,s$ & $11.76 \,s$ & $1.6 \,\mathrm{MB}$ & $3.34 \,s$ & $4.07 \,s$ & $1.0 \,\mathrm{MB}$ \\
$5$ layers w/ $400$ n. & $23.74 \,s$ & $1.84 \,s$ & $4.8 \,\mathrm{MB}$ & $218.48 \,s$ & $32.00 \,s$ & $22.4 \,\mathrm{MB}$ & $38.67 \,s$ & $12.62 \,s$ & $7.7 \,\mathrm{MB}$ & $13.62 \,s$ & $4.19 \,s$ & $4.9 \,\mathrm{MB}$ \\
$5$ layers w/ $1\,600$ n. & $187.87 \,s$ & $1.93 \,s$ & $51.0 \,\mathrm{MB}$ & $6985.48 \,s$ & $156.48 \,s$ & $212.2 \,\mathrm{MB}$ & $665.80 \,s$ & $12.53 \,s$ & $85.8 \,\mathrm{MB}$ & $291.01 \,s$ & $4.49 \,s$ & $51.4 \,\mathrm{MB}$ \\
$5$ layers w/ $6\,400$ n. & $3439.59 \,s$ & $8.22 \,s$ & $691.0 \,\mathrm{MB}$ & --- & $1320.81 \,s$ & $3155.3 \,\mathrm{MB}$ & $9673 \,s$ & $31.87 \,s$ & $1197.8 \,\mathrm{MB}$ & $3451.61 \,s$ & $10.24 \,s$ & $692.5 \,\mathrm{MB}$ \\
\midrule
Auto-Encoder & $78.61 \,s$ & $2.20 \,s$ & $16.2 \,\mathrm{MB}$ & $1207.58 \,s$ & $74.09 \,s$ & $70.7 \,\mathrm{MB}$ & $193.25 \,s$ & $14.19 \,s$ & $33.8 \,\mathrm{MB}$ & $87.39 \,s$ & $4.93 \,s$ & $16.5 \,\mathrm{MB}$ \\
        \bottomrule
    \end{tabular}%
    }
\end{table*}

\begin{remark}[Implementation]
   The implementation of $\Zeta^*$ can be done by replacing the backpropagation step of a respective layer by \eqref{eq:zeta-lambda-g-to-infty}.
   As all ``ingredients'' are already available in popular deep learning frameworks, it requires only little modification (contrasting K-FAC and $\Zeta$, which require at least one additional backpropagation.)
\end{remark}

We will publish the source code of our implementation. In the appendix, we give a PyTorch~\cite{2019-PyTorch} implementation of the proposed method ($\Zeta^*$).

\section{Related Work}

Our methods are related to K-FAC by Martens and Grosse~\cite{ref:Martens-15}. 
K-FAC uses the approximation \eqref{eq:fisher:2} to approximate the blocks of the Hessian of the empirical risk of neural networks. In most implementations of K-FAC, the off-diagonal blocks of the Hessian are also set to zero. One of the main claimed benefits of K-FAC is its speed (compared to stochastic gradient descent) for large-batch size training. That said, recent empirical work has shown that this advantage of K-FAC disappears once the additional computational costs of hyperparameter tuning for large batch training is accounted for.
There is a line of work that extends the basic idea of K-FAC to convolutional layers \cite{grosse2016kronecker}.
Botev~\textit{et al.}~\cite{botev2017practical} further extend these ideas to present KFLR, a Kronecker factored low-rank approximation, and KFRA, a Kronecker factored recursive approximation of the Gauss-Newton step.
Singh and Alistarh~\cite{singh2020woodfisher} propose WoodFisher, a Woodbury matrix inverse-based estimate of the inverse Hessian, and apply it to neural network compression.
Yao~\textit{et al.}~\cite{yao2021adahessian} propose AdaHessian, a second-order optimizer that incorporates the curvature of the loss function via an adaptive estimation of the Hessian.
Frantar~\textit{et al.}~\cite{ref:mfac-21} propose M-FAC, a matrix-free approximation of the natural gradient through a queue of the (e.g., $1\,000$) recent gradients.
These works fundamentally differ from our approach in that their objective is to approximate the Fisher or Gauss-Newton matrix inverse vector products.
In contrast, this work proposes to approximate the Gauss-Newton matrix by only one of its Kronecker factors, which we find to achieve good performance at a substantial computational speedup and reduction of memory footprint.
For an overview of this area, we refer to Kunstner~\textit{et al.}~\cite{kunstner2019limitations} and Martens~\cite{Martens2020-new-insights}.
For an overview of the technical aspects of backpropagation of second-order quantities, we refer to Dangel~\textit{et al.}~\cite{dangel2020backpack, dangel2020modular}

Taking a step back, K-FAC is one of many Newton-type methods for training neural networks. Other prominent examples of such methods include subsampled Newton methods \cite{roosta-khorasani2016SubSampled,xu2016Subsampled} (which approximate the Hessian by subsampling the terms in the empirical risk function and evaluating the Hessian of the subsampled terms) and sketched Newton methods \cite{ref:Gonen-15,ref:Pilnaci-17,ref:Montanari-15} (which approximate the Hessian by sketching, e.g., by projecting the Hessian to a lower-dimensional space by multiplying it with a random matrix). 
Another quasi-Newton method \cite{ref:Goldfarb-20} proposes approximating the Hessian by a block-diagonal matrix using the structure
of gradient and Hessian to further approximate these blocks.
The main features that distinguish K-FAC from this group of methods are K-FAC's superior empirical performance and K-FAC's lack of theoretical justification.

\section{Conclusion}

In this work, we presented ISAAC Newton, a novel approximate curvature method based on layer-inputs.
We demonstrated it to be a special case of the regularization-generalized Gauss-Newton method and empirically demonstrate its utility. 
Specifically, our method features an asymptotically vanishing computational overhead in the mini-batch regime, while achieving competitive empirical performance on various benchmark problems.

\section*{Acknowledgments}

This work was supported by the IBM-MIT Watson AI Lab, the DFG in the Cluster of Excellence EXC 2117 “Centre for the Advanced Study of Collective Behaviour” (Project-ID 390829875), the Land Salzburg within the WISS 2025 project IDA-Lab (20102-F1901166-KZP and 20204-WISS/225/197-2019), and the National Science Foundation (NSF) (grants no.~1916271, 2027737, and 2113373).

\printbibliography

\clearpage

\appendix

\section{PyTorch Implementation}
We display a PyTorch~\cite{2019-PyTorch} implementation of ISAAC for a fully-connected layer below. 
Here, we mark the important part (i.e., the part beyond the boilerplate) with a red rectangle. 
\begin{tikzpicture}[remember picture,overlay]
  \node (names) [shape=rectangle, draw=red, minimum width=0.93\linewidth, minimum height=7em, anchor=center, line width=2.5pt] at ([yshift=-1.35cm, xshift=1.4cm]current page.center) {
  };
\end{tikzpicture}
{\small
\begin{minted}[escapeinside=||]{python}
import torch

class ISAACLinearFunction(torch.autograd.Function):
    @staticmethod
    def forward(ctx, input, weight, bias, la, inv_type):
        ctx.save_for_backward(input, weight, bias)
        ctx.la = la
        if inv_type == 'cholesky_inverse':
            ctx.inverse = torch.cholesky_inverse
        elif inv_type == 'inverse':
            ctx.inverse = torch.inverse
        else:
            raise NotImplementedError(inv_type)
        return input @ weight.T + (bias if bias is not None else 0)

    @staticmethod
    def backward(ctx, grad_output):
        input, weight, bias = ctx.saved_tensors
        if ctx.needs_input_grad[0]:
            grad_0 = grad_output @ weight
        else:
            grad_0 = None

        if ctx.needs_input_grad[1]:
        
            aaT = input @ input.T / grad_output.shape[0]
            I_b = torch.eye(aaT.shape[0], device=aaT.device, dtype=aaT.dtype)
            aaT_IaaT_inv = aaT @ ctx.inverse(aaT / ctx.la + I_b)
            grad_1 = grad_output.T @ (
                    I_b - 1. / ctx.la * aaT_IaaT_inv
            ) @ input
            
        else:
            grad_1 = None

        return (
            grad_0,
            grad_1,
            grad_output.sum(0, keepdim=True) if bias is not None else None,
            None, None, None,
        )

class ISAACLinear(torch.nn.Linear):
    def __init__(self, in_features, out_features, 
                 la, inv_type='inverse', **kwargs):
        super(ISAACLinear, self).__init__(
            in_features=in_features, out_features=out_features, **kwargs
        )
        self.la = la
        self.inv_type = inv_type

    def forward(self, input: torch.Tensor) -> torch.Tensor:
        return ISAACLinearFunction.apply(
            input, self.weight,
            self.bias.unsqueeze(0) if self.bias is not None else None,
            self.la,
            self.inv_type
        )
\end{minted}
}

\section{Implementation Details}
Unless noted differently, for all experiments, we tune the learning rate on a grid of $(1, 0.3, 0.1, 0.03, 0.01, 0.003, 0.001)$. We verified this range to cover the full reasonable range of learning rates. Specifically, for every single experiment, we made sure that there is no learning rate outside this range which performs better.

For all language model experiments, we used the respective Huggingface PyTorch implementation.

All other hyperparameter details are given in the main paper.

\section{Additional Proofs}

\begin{proof}[Proof of Theorem~\ref{thm:properties:zeta}]
    We first show, that $\Zeta$ as defined in \eqref{eq:def:zeta} can be expressed as in \eqref{eq:zeta:thm1}. Indeed by using \eqref{eq:IHGP}, the Woodbury matrix identity and by regularizing the inverses, we can see that
        \begin{align*}
            \Zeta &=  \lambdaG \lambdaX (\gradRand{}^\top\gradRand{}/\theB + \lambdaG \mathbf{I})^{-1} \otimes (\layerInput{}^\top\layerInput{}/\theB + \lambdaX \mathbf{I})^{-1} \gradLoss^\top\layerInput\\
            &=\lambdaG \lambdaX \cdot  \left(\lambdaG \mathbf{I}_m+\gradRand^\top\gradRand/\theB\right)^{-1}  \gradLoss^\top\layerInput  \left(\lambdaX \mathbf{I}_n + \layerInput^\top\layerInput/\theB\right)^{-1} %
            \\
            & = \lambdaG \lambdaX \cdot \left(\frac1\lambdaG \mathbf{I}_m - \frac{1}{\theB\lambdaG^2}\gradRand^\top \left( \mathbf{I}_b + \frac{1}{\theB\lambdaG} \gradRand\gradRand^\top \right)^{-1} \gradRand\right)  \\ 
            &\quad \gradLoss^\top\layerInput  \left(\frac1\lambdaX \mathbf{I}_n - \frac1{\theB\lambdaX^2}\layerInput^\top \left( \mathbf{I}_b + \frac1{\theB\lambdaX} \layerInput\layerInput^\top \right)^{-1} \layerInput\right)  \\
            & = \left(\mathbf{I}_m - \frac{1}{\theB\lambdaG}\gradRand^\top \left( \mathbf{I}_b + \frac{1}{\theB\lambdaG} \gradRand\gradRand^\top \right)^{-1} \gradRand\right) \cdot \gradLoss^\top \\ 
            &\quad  \cdot \layerInput \cdot \left(\mathbf{I}_n - \frac1{\theB\lambdaX}\layerInput^\top \left( \mathbf{I}_b + \frac1{\theB\lambdaX} \layerInput\layerInput^\top \right)^{-1} \layerInput\right)  \\
            & = \left(\mathbf{I}_m - \frac{1}{\theB\lambdaG}\gradRand^\top \left( \mathbf{I}_b + \frac{1}{\theB\lambdaG} \gradRand\gradRand^\top \right)^{-1} \gradRand\right) \cdot \gradLoss^\top \\ 
            &\quad  \cdot  \left(\layerInput - \frac1{\theB\lambdaX}\layerInput\layerInput^\top \left( \mathbf{I}_b + \frac1{\theB\lambdaX} \layerInput\layerInput^\top \right)^{-1} \layerInput\right) \\
            & = \left(\mathbf{I}_m - \frac{1}{\theB\lambdaG}\gradRand^\top \left( \mathbf{I}_b + \frac{1}{\theB\lambdaG} \gradRand\gradRand^\top \right)^{-1} \gradRand\right) \cdot \gradLoss^\top \\ 
            &\quad  \cdot  \left(\mathbf{I}_b - \frac1{\theB\lambdaX}\layerInput\layerInput^\top \left( \mathbf{I}_b + \frac1{\theB\lambdaX} \layerInput\layerInput^\top \right)^{-1} \right) \cdot \layerInput 
        \end{align*}

        To show Assertion~(i), we note that according to \eqref{eq:def:zeta}
        \begin{align*}
            &\lim_{\lambdaG{}, \lambdaX{} \to 0} \frac{1}{\lambdaG{}\lambdaX{}}\Zeta \\
            &\quad =\lim_{\lambdaG{}, \lambdaX{} \to 0}  (\gradRand{}^\top\gradRand{}/\theB + \lambdaG \mathbf{I})^{-1} \otimes (\layerInput{}^\top\layerInput{}/\theB + \lambdaX \mathbf{I})^{-1} \gradLoss^\top\layerInput \\
            &\quad = (\gradRand{}^\top\gradRand{} )^{-1} \otimes (\layerInput{}^\top\layerInput{} )^{-1} \gradLoss^\top\layerInput \\
            &\quad \approx \mathbf{G}^{-1} \gradLoss^\top\layerInput,
        \end{align*}
        where the first equality uses the definition of $\Zeta$ in \eqref{eq:def:zeta}. The second equality is due to the continuity of the matrix inversion and the last approximate equality follows from the K-FAC approximation \eqref{eq:KFAC:formula}.

       To show Assertion~(ii), we consider $\lim_{\lambdaG \to \infty}$ and $\lim_{\lambdaX \to \infty}$ independently, that is
        \begin{align}
            &\lim_{\lambdaG \to \infty} \lambdaG \cdot \left(\lambdaG \mathbf{I}_m+\gradRand^\top\gradRand/\theB\right)^{-1} \label{eq:lambda-g-to-infty} \\
            &= \lim_{\lambdaG \to \infty} \left(\mathbf{I}_m+\frac{1}{\theB\lambdaG} \gradRand^\top\gradRand\right)^{-1} =\ \mathbf{I}_m, \notag
        \end{align}
        and
        \begin{align}
            &\lim_{\lambdaX \to \infty} \lambdaX \cdot \left(\lambdaX \mathbf{I}_n+\layerInput^\top\layerInput / \theB\right)^{-1} \\
            &= \lim_{\lambdaX \to \infty} \left(\mathbf{I}_n+\frac1{\theB\lambdaX} \layerInput^\top\layerInput\right)^{-1} =\ \mathbf{I}_n. \notag
        \end{align}
        This then implies
        \begin{align}
            &\lim_{\lambdaG, \lambdaX \to \infty}  \lambdaG \left(\lambdaG \mathbf{I}_m+\gradRand^\top\gradRand/\theB\right)^{-1}\cdot \gradLoss^\top \\
            &\qquad\qquad \cdot \layerInput \cdot\lambdaX \left(\lambdaX \mathbf{I}_n + \layerInput^\top\layerInput/\theB\right)^{-1} \notag \\
            & \qquad\quad= \mathbf{I}_m \cdot \gradLoss^\top\layerInput \cdot \mathbf{I}_n = \gradLoss^\top\layerInput, \notag
        \end{align}
        which concludes the proof.
    \end{proof}

    \section{Additional Experiments}
    
    \begin{figure*}[h]
        \centering
        \includegraphics[width=.5\linewidth]{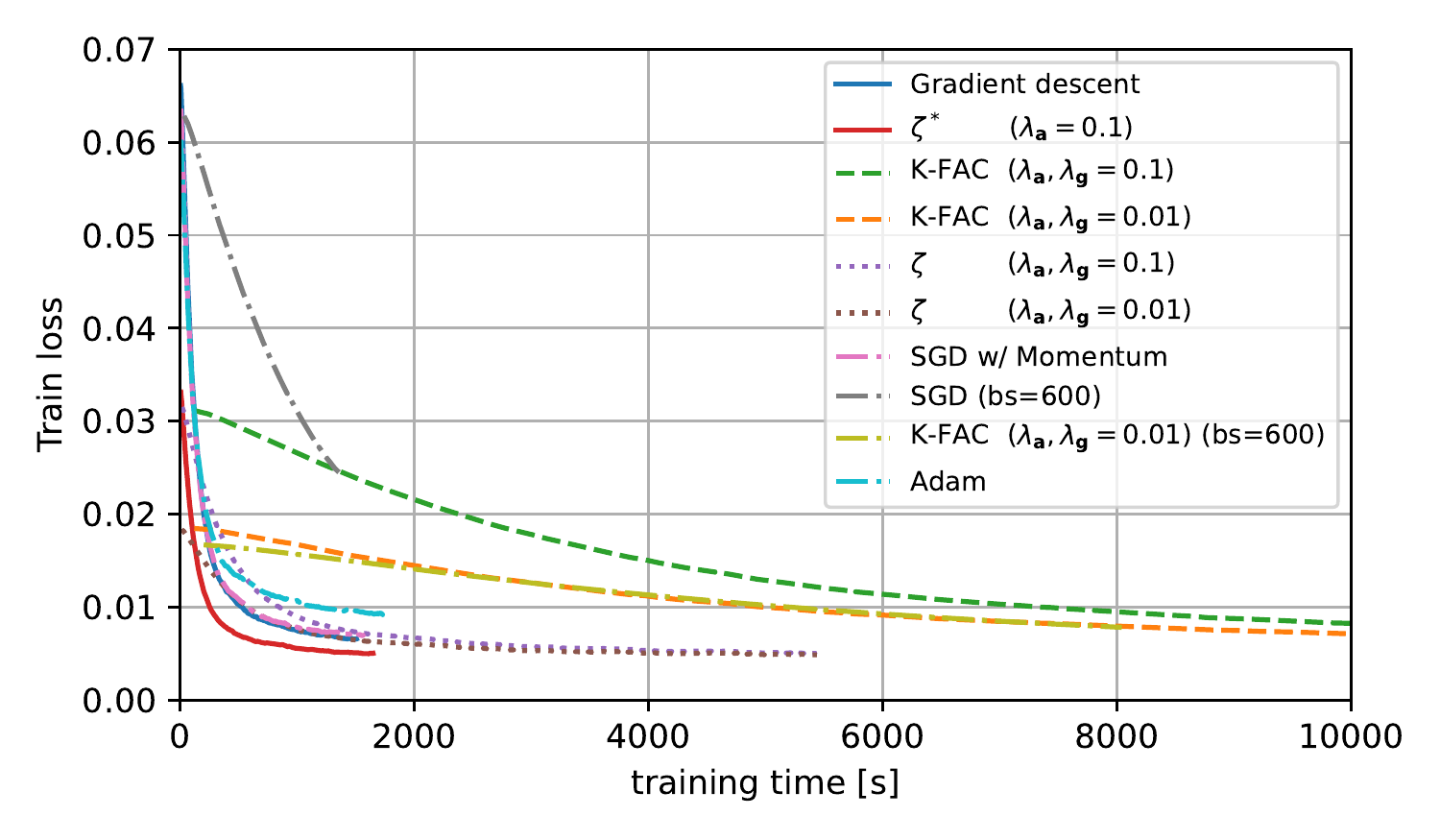}%
        \includegraphics[width=.5\linewidth]{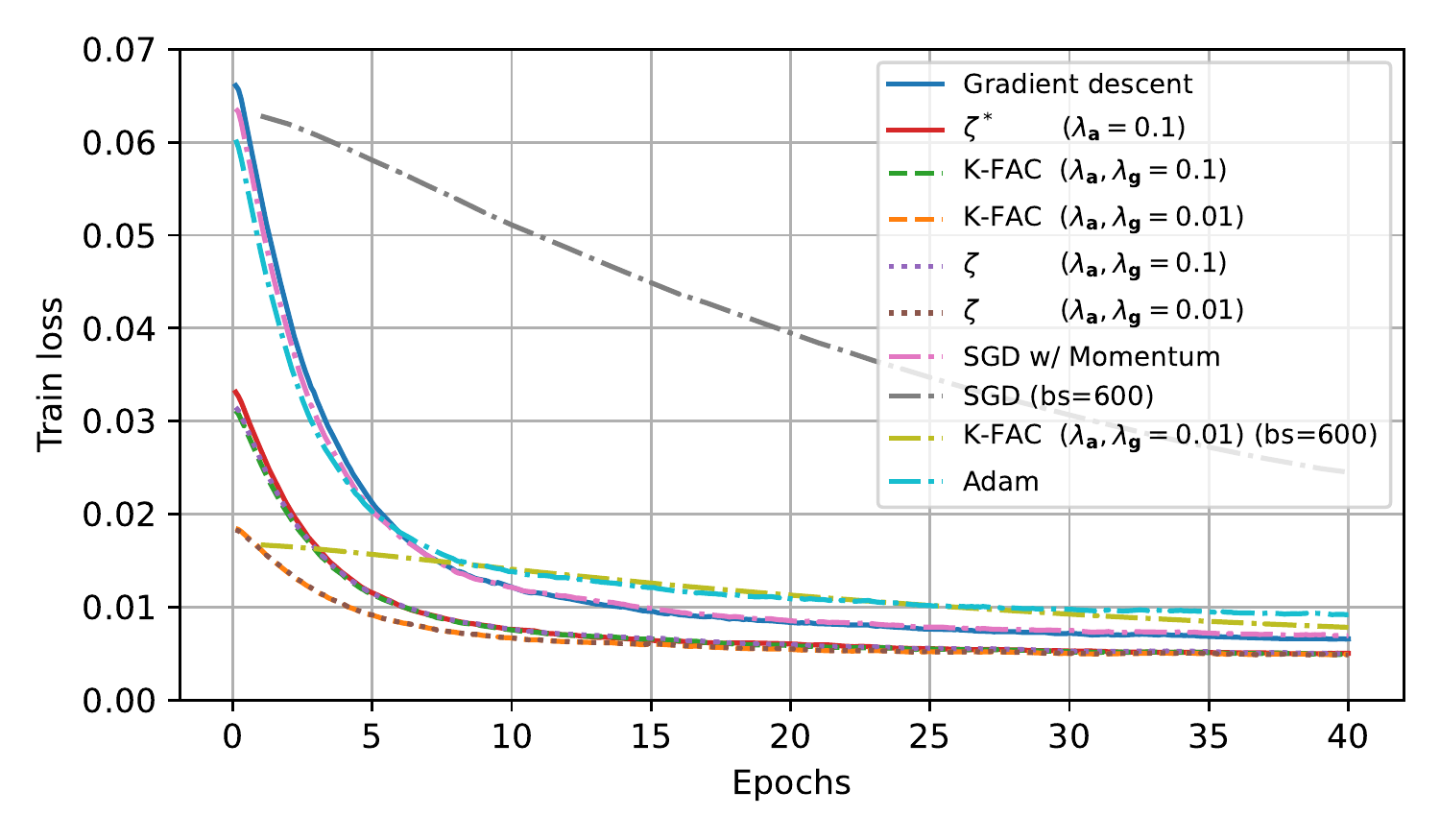}\\%
        \includegraphics[width=.5\linewidth]{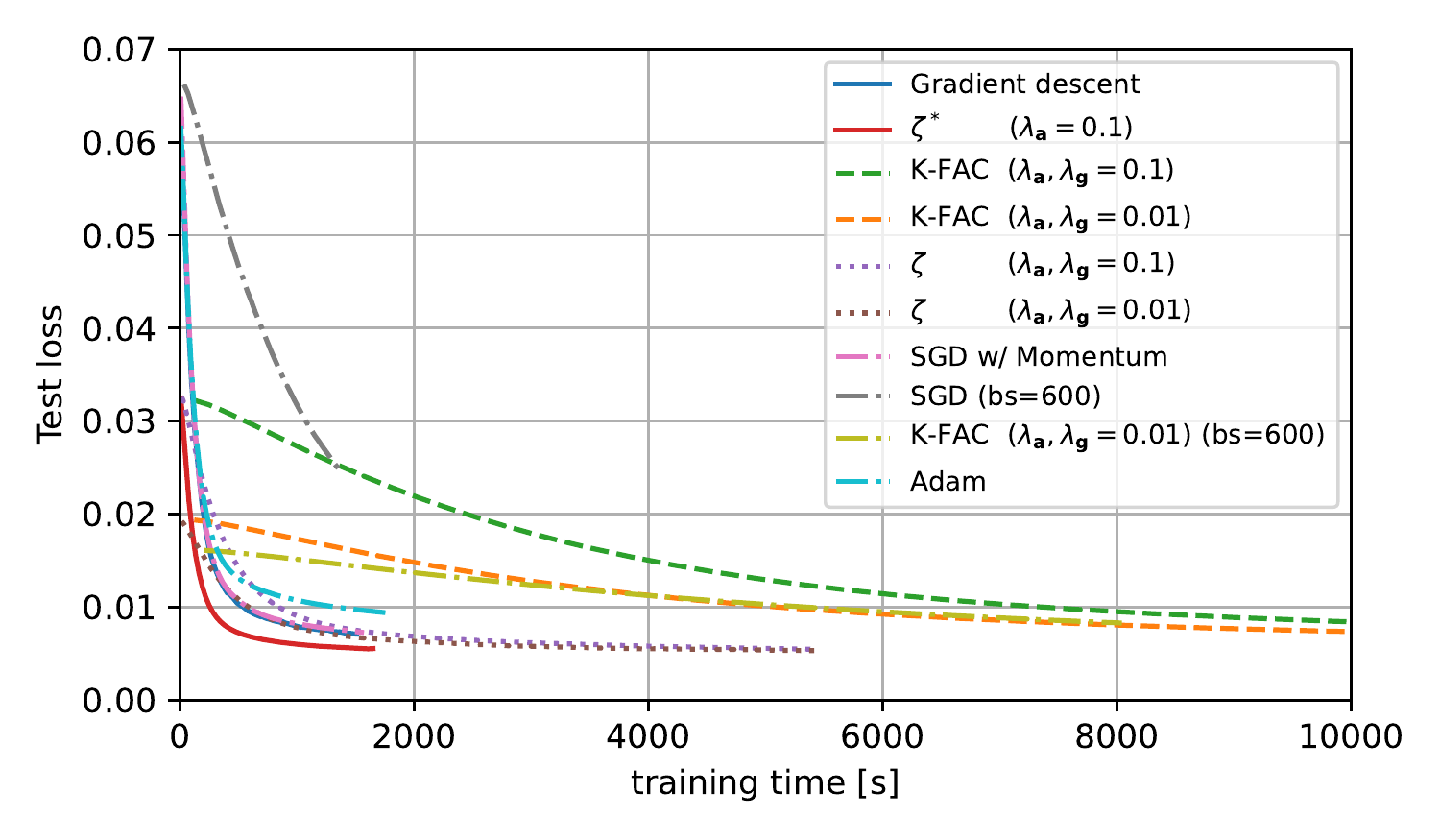}%
        \includegraphics[width=.5\linewidth]{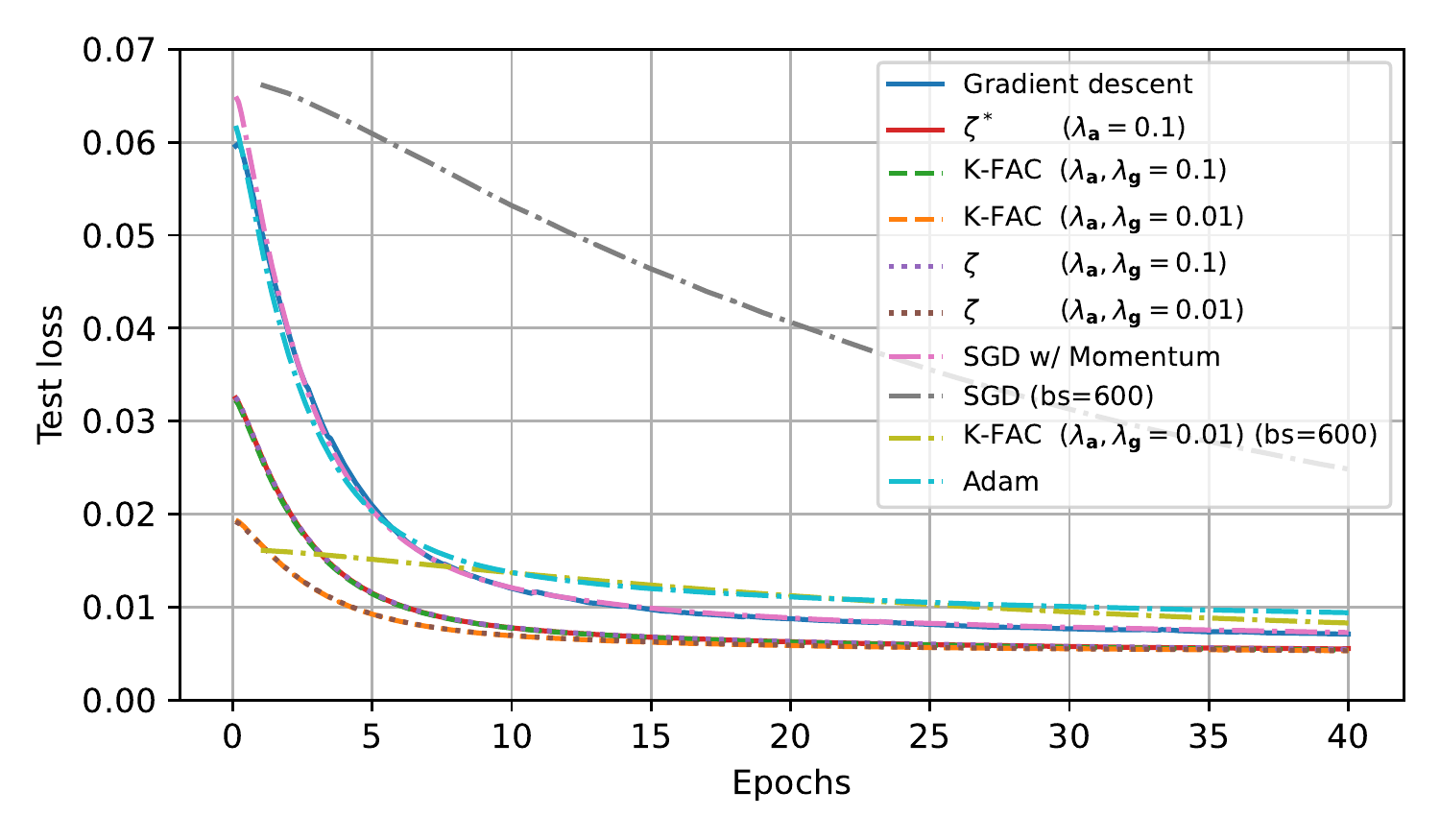}\\%
        \caption{%
        Training loss of the MNIST auto-encoder trained with gradient descent, K-FAC, $\Zeta$, $\Zeta^*$, as well as SGD w/ momentum, SGD with a $10\times$ larger batch size (600), K-FAC with a $10\times$ larger batch size (600), and Adam.
        Comparing the performance per real-time (left) and per number of epochs (right). We display both the training loss (top) as well as the test loss (bottom)
        Runtimes are for a CPU core.
        }
        \label{fig:E1:autoencoder:loss:plot:sm}
    \end{figure*}

    \begin{figure}[h]
        \centering
        \includegraphics[width=.6\linewidth]{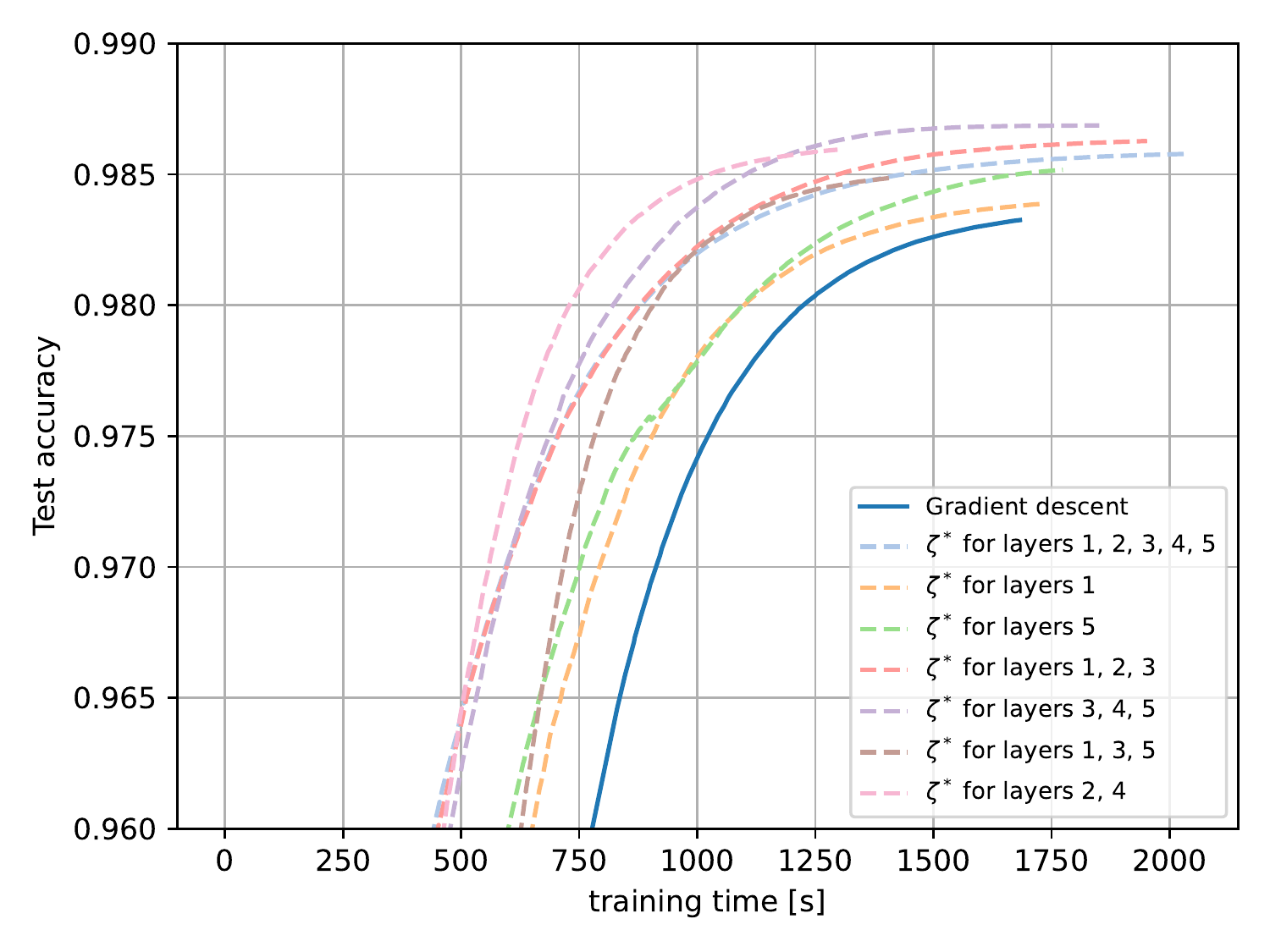}%
        \caption{%
        Test accuracy for training on the MNIST classification task using $\Zeta^*$ only in selected layers.
        Runtimes are for CPU.
        }
        \label{fig:E3:zeta-star-layers-2}
    \end{figure}

   \begin{figure*}[t]
    \centering
    \hspace*{-0.0425\linewidth}\includegraphics[width=1.08\linewidth]{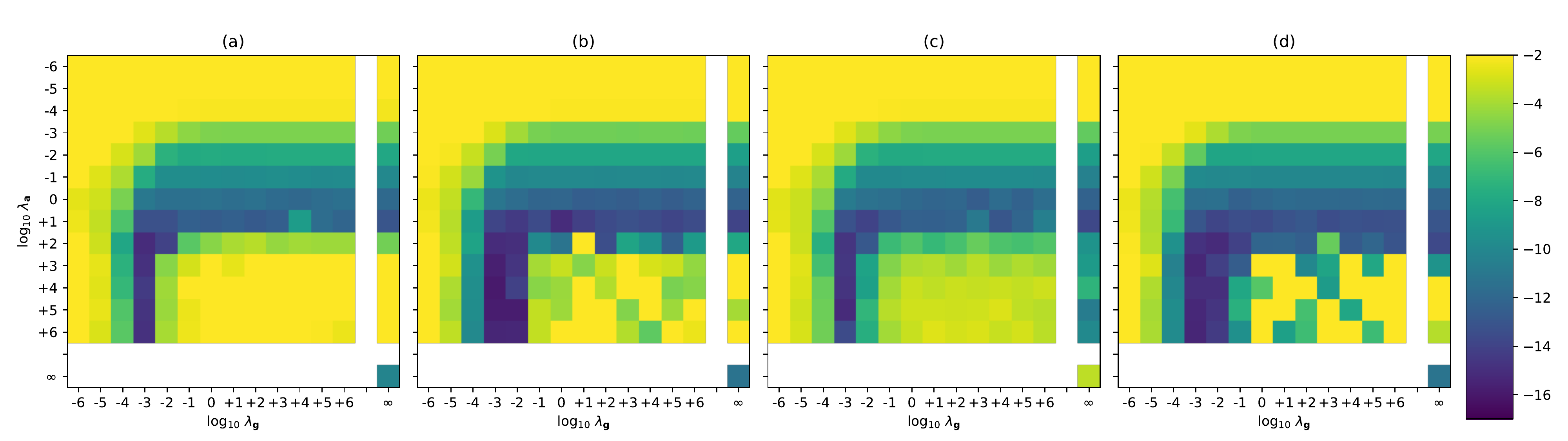}\\[-.3em]
    \hspace*{-0.0425\linewidth}\includegraphics[width=1.08\linewidth]{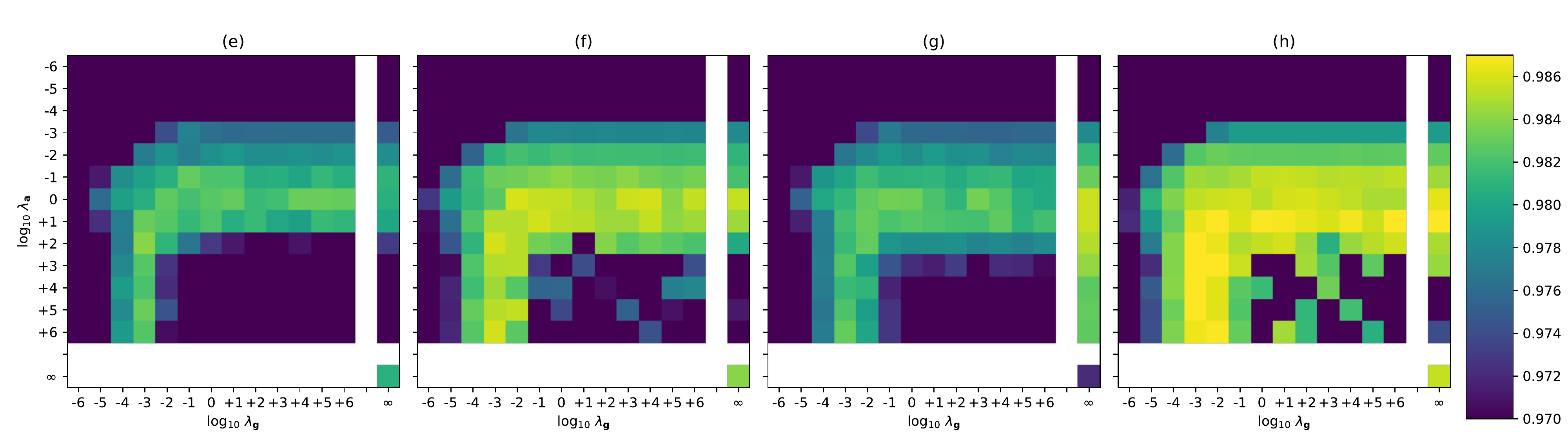}\\[-.75em]
    \caption{
    Reproduction of the experiments in Figure~\ref{fig:E1:classification:regularizers} but with the Fisher-based natural gradient formulation from Remark~\ref{thm:fisher:extension}. 
    For a description of the experimental settings, see the caption of Figure~\ref{fig:E1:classification:regularizers}.
    We observe that, for large $\lambdaG$, the behavior is similar to Figure~1, which is expected as they are the same in the limit of $\lambdaG\to\infty$.
    Further, we observe that (in this case of the Fisher-based $\Zeta$) not only in the limit of $\lambdaG\to\infty$ but also in the limit of $\lambdaX\to\infty$ good performance can be achieved. 
    Moreover, in this specific experiment, $\lambdaX\to\infty$ has slightly better optimal performance compared to $\lambdaG\to\infty$, but $\lambdaX\to\infty$ is more sensitive to changes in $\lambda_g$ compared to the sensitivity of the case of $\lambdaG\to\infty$ wrt. changes in $\lambdaX$. 
    This phenomenon was also (to a lesser extent) visible in the experiments of Figure~\ref{fig:E1:classification:regularizers}.
    We would like to remark that the case of $\lambdaG\to\infty$ (i.e., $\Zeta^\star$) is computationally more efficient compared to $\lambdaX\to\infty$.
    }
    \label{fig:fig-1-w-natural-grad}
\end{figure*}

\end{document}